  \providecommand\BibTeX{{%
    \normalfont B\kern-0.5em{\scshape i\kern-0.25em b}\kern-0.8em\TeX}}}
\newcommand{\eg}{{\em e.g.}\xspace}
\newcommand{\ie}{{\em i.e.}\xspace}
\begin{document}

\title{Probabilistic Lexicase Selection}

\author{Li Ding}
\affiliation{%
  \institution{University of Massachusetts Amherst}
  \city{Amherst}
  \state{MA}
  \country{}
}
\email{liding@umass.edu}

\author{Edward Pantridge}
\affiliation{
  \institution{Swoop}
  \city{Cambridge}
  \state{MA}
  \country{}
}
\email{ed@swoop.com}

\author{Lee Spector}
\affiliation{%
  \institution{Amherst College}
  \institution{University of Massachusetts Amherst}
  \city{Amherst}
  \state{MA}
  \country{}
}
\email{lspector@amherst.edu}

\renewcommand{\shortauthors}{Ding, et al.}


\begin{abstract}
  Lexicase selection is a widely used parent selection algorithm in genetic programming, known for its success in various task domains such as program synthesis, symbolic regression, and machine learning. Due to its non-parametric and recursive nature, calculating the probability of each individual being selected by lexicase selection has been proven to be an NP-hard problem, which discourages deeper theoretical understanding and practical improvements to the algorithm. In this work, we introduce probabilistic lexicase selection (plexicase selection), a novel parent selection algorithm that efficiently approximates the probability distribution of lexicase selection. Our method not only demonstrates superior problem-solving capabilities as a semantic-aware selection method, but also benefits from having a probabilistic representation of the selection process for enhanced efficiency and flexibility. Experiments are conducted in two prevalent domains in genetic programming: program synthesis and symbolic regression, using standard benchmarks including PSB and SRBench. The empirical results show that plexicase selection achieves state-of-the-art problem-solving performance that is competitive to the lexicase selection, and significantly outperforms lexicase selection in computation efficiency.
\end{abstract}

\begin{CCSXML}
  <ccs2012>
  <concept>
  <concept_id>10002950.10003714.10003716.10011804.10011813</concept_id>
  <concept_desc>Mathematics of computing~Genetic programming</concept_desc>
  <concept_significance>500</concept_significance>
  </concept>
  <concept>
  <concept_id>10002950.10003648.10003671</concept_id>
  <concept_desc>Mathematics of computing~Probabilistic algorithms</concept_desc>
  <concept_significance>300</concept_significance>
  </concept>
  <concept>
  <concept_id>10010147.10010257.10010258.10010259</concept_id>
  <concept_desc>Computing methodologies~Supervised learning</concept_desc>
  <concept_significance>300</concept_significance>
  </concept>
  </ccs2012>
\end{CCSXML}

\ccsdesc[500]{Mathematics of computing~Genetic programming}
\ccsdesc[300]{Mathematics of computing~Probabilistic algorithms}
\ccsdesc[300]{Computing methodologies~Supervised learning}

\keywords{genetic programming, evolutionary algorithms, parent selection, program synthesis, machine learning, symbolic regression}

\maketitle

\section{Introduction}

Parent selection is an essential component in genetic and evolutionary algorithms, which determines a set of individuals to use as the source to create off-sprints for the next generation. Among many parametric and non-parametric selection methods, lexicase selection~\citep{helmuth2014solving,spector2012assessment} has shown state-of-the-art performance in genetic programming, and has been successfully extended to other problem domains such as symbolic regression~\cite{la2016epsilon,la2018learning}, machine learning~\cite{la2020genetic}, and deep learning~\cite{ding2022optimizing}.

As a semantic selection method, the lexicase selection algorithm\footnote[1]{For comprehensive insights into the lexicase selection algorithm, detailed descriptions can be found in Appendix Sec.~\ref{asec:lex}.} evaluates individuals on each single training case in random orders. One major drawback of lexicase selection is that the selection process remains a black-box due to its non-parametric and recursive nature. As a result, numerous repeated selection events are usually required when using lexicase selection, as opposed to more efficient probabilistic methods, \eg, fitness proportionate selection~\cite{golberg1989genetic,lipowski2012roulette}, that directly sampling the individuals from a probability distribution calculated from fitness measures.

While repetitions of lexicase selection events eventually produce a probability distribution of individuals to be selected, the underlying recursion introduces step-wise dependency and thus prevents a straightforward way to numerically calculate the probability distribution of individuals to be selected. Nonetheless, there has been rising interest in analyzing lexicase selection from a probabilistic point of view~\cite{la2019probabilistic}, which aims to develop a deeper theoretical understanding of lexicase selection. In fact, recent work~\cite{dolson2023calculating} has proved that the exact calculation of the probabilities of individuals being selected in lexicase selection is nevertheless an NP-Hard problem, which implies the potential need for an approximation solution that would help develop better theoretical interpretation and practical improvement to lexicase selection.

With the above motivations in mind, we propose probabilistic lexicase selection (plexicase selection), which is, to our knowledge, the first semantic-aware selection method that has a probabilistic representation of selection events. Our method efficiently calculates an approximation to the probability distribution of individuals to be selected by lexicase selection, and samples individuals from this distribution instead of actually performing selection. There are two main advantages of the proposed method. Firstly, plexicase selection directly calculates the probability of individuals being selected instead of performing repeated selection events, which can reduce the algorithm runtime significantly. Secondly, having the probability distribution of individuals allow us to perform parametric optimization on the selection process. We introduce a probability manipulation process to plexicase selection, which uses a hyperparameter to control the kurtosis of the distribution of individuals, and thus manages to further enhance the problem-solving performance.

In order to illustrate the advantages of plexicase selection, we conduct experiments in two prevalent domains: program synthesis and symbolic regression. Results on standard benchmarks including the General Program Synthesis Benchmark Suite (PSB)~\cite{helmuth2015general} and Symbolic Regression Benchmark (SRBench)~\cite{la2021contemporary} suggest that plexicase selection significantly outperforms lexicase selection in terms of efficiency, and at the same time demonstrates superior or competitive problem-solving capabilities.

The paper is organized as follows. In Section~\ref{sec:2}, we summarize the background of lexicase selection and the challenges that motivate our work. Section~\ref{sec:3} explains the theories and methodologies of the proposed algorithm in detail, and Section~\ref{sec:4} describes the experiments and results in two domains where genetic programming has been extensively adopted. The results include various measures of problem-solving performance, algorithm runtime, and ablation studies on comparisons and hyperparameters. We conclude with a discussion of limitations and future work.

\section{Related Work}
\label{sec:2}

Lexicase selection~\cite{helmuth2014solving,spector2012assessment} assesses performance on individual training cases instead of employing aggregated accuracy or fitness metrics. It is a semantic-aware selection method~\cite{liskowski2015comparison} with the benefit of including semantic information regarding the population. In other words, lexicase selection tends to select specialist individuals for being elite on a subset of cases~\cite{helmuth2019lexicase,helmuth2020importance}, which enhances population diversity~\cite{helmuth2016lexicase,moore2018tiebreaks}. Lexicase selection has been demonstrated more reliable performance than other contemporary genetic programming methods~\cite{fieldsend2015strength,galvan2013using,krawiec2015automatic}. Recent work has empirically studied and extended lexicase selection to other supervised and unsupervised learning domains, \eg, symbolic regression~\citep{la2016epsilon}, machine learning~\citep{la2020learning,la2020genetic}, rule-based learning~\citep{aenugu2019lexicase}, deep learning~\citep{ding2021evolving,ding2022optimizing}, and evolutionary robotics~\citep{huizinga2018evolving,la2018behavioral}, which illustrates its capability of improving model performance and generalization.

Lexicase selection involves an essential procedure of gradually eliminating individuals by evaluating the population on each case. Recent work started to investigate the probability distribution of individuals selected by lexicase selection. \citet{helmuth2016impact} explored the correlation between the probability of selecting an individual and its rank in a population. \citet{la2019probabilistic} derived the expected probabilities of lexicase selection and studied the effect of varying population and training set sizes. \citet{dolson2023calculating} further proved that the problem of calculating the exact probabilities of lexicase selection is NP-Hard, which inspired this work to alternatively design an approximation algorithm to obtain the probability distribution.

Probability-based selection methods have been studied for decades in genetic and evolutionary computation. The basic strategy is to assign higher selection probabilities to better-fitted individuals based on their aggregated fitness. The most common method in the category is fitness proportionate selection~\cite{golberg1989genetic,lipowski2012roulette}, also known as roulette wheel selection, which assumes the probability of selection is proportional to the fitness of an individual. Extensions~\cite{baker2014adaptive} to fitness proportionate explored using the ranking or ordering of individuals rather than their raw fitness. There have been some other methods with selection probabilities depending on fitness, \eg, remainder stochastic independent sampling~\cite{baker1987reducing} and stochastic universal selection~\cite{goldberg1991comparative}. While these methods have slightly different statistical properties, none of them takes into account the program semantics~\cite{liskowski2015comparison}, \ie, the vector of outputs returned by a given program for a given set of fitness evaluations. To our knowledge, the proposed plexicase selection is the first semantic-aware selection method that has a probabilistic representation of the population, which has substantial advantages in both problem-solving performance and efficiency, as described in detail in Section~\ref{sec:pls}.

From the perspective of algorithm efficiency, there have been a few attempts to reduce the runtime of lexicase selection. \citet{helmuth2022population} theoretically analyzed the runtime of lexicase selection and found that the expected runtime depended on population diversity. \citet{de2019batch} introduced a hybrid selection method combining the idea of tournament and lexicase selection to improve both efficiency and quality of solutions. \citet{aenugu2019lexicase} introduces a batched variant of lexicase selection, which assesses batched data samples instead of individual cases during selection events. Most recently, \citet{ding2022going,ding2022lexicase} proposed a partial evaluation method based on weighted shuffle~\cite{troise2018lexicase} to reduce the total number of evaluations. In general, most of the prior work focuses on optimizing the single selection event, but still requires repetitions of selection events to complete parent selection of the whole population. Our work takes a different direction: first, we calculate the probability distribution of individuals to be selected, and then we perform all the selection events at once by sampling from the distribution. Such an approach produces significant improvement in efficiency for up to $10$ times speed-up, as shown in our empirical results on the program synthesis task.

Another popular trend in developing practical improvements to lexicase selection is to utilize the downsampling of training cases. \citet{hernandez2019random} proposed downsampled lexicase selection and \cite{ferguson2019characterizing} studied it further. Recent work~\cite{helmuth2020explaining,helmuth2022problem} investigated the reasons behind the effectiveness of downsampling, and proposed a few hypotheses. \citet{moore2017lexicase} also uses downsampling with lexicase selection, but with different terminology. \citet{boldi2023informed} recently introduce informed downsampled lexicase selection, which leverages population statistics to build downsamples that contain more distinct and informative training cases. In this work, we demonstrate that the proposed plexicase selection method works well with existing downsampling terminologies, and the benefits of adding downsampling to plexicase selection are more significant compared to lexicase selection.

\section{Methods}
\label{sec:3}

In this section, we describe the proposed method in detail. The plexicase selection method has two main components: finding Pareto set boundaries and assigning probabilities to individuals. We start by reviewing the preliminaries on Pareto set boundaries and their connection with lexicase selection, followed by proposing an algorithm to efficiently obtain Pareto set boundaries through pairwise comparisons. We then introduce a method to assign probabilities to individuals, and finally present plexicase selection and its extension with $\epsilon$-relaxation.

\subsection{Preliminaries on Pareto Set Boundaries}
\label{sec:3.1}

The behavior of lexicase selection can be alternative interpreted as multi-objective optimization with respect to the training cases, where each training case is considered as a stand-alone objective. \citet{la2019probabilistic} has proved that lexicase selection only selects \textit{Pareto set boundaries}, \ie, individuals that are in the boundaries of the Pareto front. We first review the definitions related to Pareto set boundaries in the context of evolutionary computing.

Given a population $\mathcal{Y}$, $f_j(y_i)$ denotes the fitness (objective) function for the $i$th individual $y_i$ on the $j$th case $x_j$ in the training set $\mathbb{X}$ of size $N$. In this work, we assume the goal of optimization is to maximize the fitness function.

\begin{definition}
  \label{def:dominate}
  For individuals $y_1, y_2 \in \mathcal{Y}$,  if $f_j(y_1) \geq f_j(y_2) \ \forall j \in \{1,\cdots,N\}$, we state $y_1 \succeq y_2$ ($y_1$ dominates $y_2$).
\end{definition}

\begin{definition}
  \label{def:strictdomination}
  If $y_1 \succeq y_2$ and $\exists j\in\{1,\cdots,N\}$ for which $f_j(y_1) > f_j(y_2)$, we state $y_1 \succ y_2$ ($y_1$ strictly dominates $y_2$).
\end{definition}

In parent selection, if two individuals have the exact same fitness on every training case, the selection algorithm can not distinguish them. In this work, we only consider the situation where the individuals have distinct overall fitness evaluations, \ie, $\exists j\in\{1,\cdots,N\}$ for which $f_j(y_i) \neq f_j(y_k) \ \forall y_i, y_k \in \mathcal{Y}$. We show that the domination in this case can be replaced by its strict form.

\begin{lemma}
  \label{lem:strict}
  If an individual is dominated by another individual, and they have distinct case-wise fitness, the domination is strict.
\end{lemma}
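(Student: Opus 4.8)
The plan is to unfold the two definitions and observe that the strict-domination conclusion follows immediately once the distinctness hypothesis is combined with the (non-strict) domination hypothesis. Concretely, I would suppose $y_1 \succeq y_2$. By Definition~\ref{def:dominate} this means $f_j(y_1) \geq f_j(y_2)$ for every $j \in \{1,\ldots,N\}$. What remains, in view of Definition~\ref{def:strictdomination}, is to exhibit a single index at which this inequality is strict.

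To produce such an index I would invoke the distinct case-wise fitness hypothesis. Reading that assumption as the statement that $y_1$ and $y_2$ do not have identical fitness vectors, \ie there exists some $j_0 \in \{1,\ldots,N\}$ with $f_{j_0}(y_1) \neq f_{j_0}(y_2)$, I then restrict the domination inequality to that coordinate. At $j_0$ we have both $f_{j_0}(y_1) \geq f_{j_0}(y_2)$ (from domination) and $f_{j_0}(y_1) \neq f_{j_0}(y_2)$ (from distinctness), and a real number that is both $\geq$ and $\neq$ another must be strictly greater. Hence $f_{j_0}(y_1) > f_{j_0}(y_2)$, which is precisely the extra clause required by Definition~\ref{def:strictdomination}, so $y_1 \succ y_2$ and the lemma is established.

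The argument carries no genuine difficulty; it is essentially a one-line combination of $\geq$ and $\neq$ at a single coordinate. The only point deserving care is the interpretation of the distinctness assumption, whose notation in the surrounding text is slightly ambiguous: the lemma concerns a single pair, so I would use it in the form ``there is at least one case on which $y_1$ and $y_2$ disagree,'' which is exactly what the conclusion needs and avoids over-claiming that the two individuals differ on every case. No edge cases arise from ties, since the hypothesis explicitly excludes the fully-tied pair, and the stated maximization convention fixes the direction of the resulting strict inequality.
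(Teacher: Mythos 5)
Your proof is correct and follows essentially the same route as the paper's: unpack $y_1 \succeq y_2$ into coordinate-wise $\geq$, use distinctness to obtain a coordinate where the fitnesses differ, combine $\geq$ and $\neq$ into $>$ at that coordinate, and conclude strict domination via Definition~\ref{def:strictdomination}. Your added remark about reading the distinctness hypothesis as ``differ on at least one case'' is a sensible clarification of the paper's slightly ambiguous phrasing, but the argument is identical.
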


\begin{proof}
  Let $y_1 \succeq y_2$, we have $f_j(y_1) \geq f_j(y_2) \ \forall j \in \{1,\cdots,N\}$. If $y_1$ and $y_2$ have distinct case-wise fitness, we have $\exists k\in\{1,\cdots,N\}$ for which $f_k(y_1) \neq f_k(y_2)$. Thus, we have $f_k(y_1) > f_k(y_2)$. According to Def.~\ref{def:strictdomination}, we can obtain $y_1 \succ y_2$.
\end{proof}

The Pareto set $\mathcal{P}$ of $\mathcal{N}$ is a subset ($\mathcal{P} \subseteq \mathcal{N}$) where each individual is non-dominated. We have the following definitions regarding Pareto set with respect to parent selection.

\begin{definition}
  \label{def:pareto}
  Given $y_i \in \mathcal{N}$, if $y_i \nprec y_k \ \forall y_k \in \mathcal{Y}$, $y_i \in \mathcal{P}$ where $\mathcal{P}$ is the Pareto set.
\end{definition}

\begin{definition}
  \label{def:paretobound}
  Given $y_i \in \mathcal{P}$, if $\exists j\in\{1,\cdots,N\}$ for which $f_j(y_i) \geq \max_{y_k\in \mathcal{Y}} f_j(y_k)$, $y_i$ is a Pareto set \textit{boundary}.\footnote[2]{We follow the definition of Pareto set \textit{boundary} in \citet{la2019probabilistic}. Note that the Pareto set boundary in this work refers to the `boundary' of the Pareto set, which is different from Pareto set (or Pareto boundary used in other work).}
\end{definition}

\citet{la2019probabilistic} proved the following, which appears as Theorem 3.4 in their paper:

\begin{theorem}
  \label{thm:pareto}
  If individuals from a population $\mathcal{Y}$ are selected by lexicase selection, those individuals are Pareto set boundaries of $\mathcal{Y}$ with respect to the training set $\mathbb{X}$.
\end{theorem}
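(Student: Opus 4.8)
The plan is to verify both defining conditions of a Pareto set boundary (Def.~\ref{def:paretobound}) for any individual $y_i$ that a lexicase selection event can return: that $y_i$ attains the per-case maximum on at least one case, and that $y_i$ is non-dominated, \ie, lies in the Pareto set $\mathcal{P}$ (Def.~\ref{def:pareto}). Two structural facts about a single lexicase event drive everything. It fixes a random permutation of the $N$ cases and then repeatedly restricts the surviving pool to the individuals of maximal fitness on the current case, so the survivor set shrinks monotonically (once an individual is dropped it never returns); and the event terminates either when a single survivor remains, who is then selected, or when all cases are exhausted, in which case a survivor is chosen uniformly at random.

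The boundary (existential) condition is immediate. Let $x_{j_1}$ be the first case in the permutation that yielded $y_i$. The opening filtering step retains only individuals attaining $\max_{y_k\in\mathcal{Y}} f_{j_1}(y_k)$, and since the pool only shrinks afterward, the selected $y_i$ must satisfy $f_{j_1}(y_i)=\max_{y_k\in\mathcal{Y}} f_{j_1}(y_k)$. This supplies exactly the index demanded by Def.~\ref{def:paretobound}.

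For non-domination I would argue by contradiction. Suppose the selected $y_i$ is not in $\mathcal{P}$; then by Def.~\ref{def:pareto} it is strictly dominated by some $y_k$ (and if it were only dominated in the sense of Def.~\ref{def:dominate}, Lemma~\ref{lem:strict} together with the standing distinctness assumption would upgrade this to strict domination), so $f_j(y_k)\geq f_j(y_i)$ for every $j$, with $f_m(y_k)>f_m(y_i)$ on at least one case $x_m$. The crux is an induction over the processing order showing that $y_k$ survives every filtering step $y_i$ survives: if both remain in the pool and $y_i$ passes the filter on some case $x_j$, then $f_j(y_i)$ equals the current pool maximum, and $f_j(y_k)\geq f_j(y_i)$ forces $y_k$ to attain that maximum as well, so $y_k$ passes too. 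Hence the survivor set always contains $y_k$ alongside $y_i$, so $y_i$ is never the unique survivor and can be returned only through the final random tiebreak, which requires it to survive all $N$ cases.

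This is where the argument closes, and also where I expect the only real subtlety. Because the survivor set is monotone, surviving to the end means $y_i$ (and therefore $y_k$) was still present when the distinguishing case $x_m$ was processed; at that step $f_m(y_i)<f_m(y_k)$ while the pool maximum is at least $f_m(y_k)$, so $y_i$ is eliminated there, contradicting its survival. Thus no dominated individual can be selected, giving $y_i\in\mathcal{P}$, and combined with the boundary condition this shows $y_i$ is a Pareto set boundary. The main obstacle is the bookkeeping around the two termination modes together with the worry that $x_m$ might never be reached; the monotonicity of the survivor pool is precisely what dissolves both concerns, since it guarantees $x_m$ is encountered while $y_i$ is still alive and collapses the early-termination and exhausted-cases branches into the same contradiction.
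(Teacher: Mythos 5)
You could not have known this from the statement alone, but the paper contains no proof of Theorem~\ref{thm:pareto}: it is imported as a known result from \citet{la2019probabilistic}, where it appears as their Theorem~3.4, so there is no in-paper argument to compare yours against. Judged on its own merits, your proof is correct and complete, and it supplies exactly what the cited result asserts. The existential half of Def.~\ref{def:paretobound} follows, as you say, because the very first filter retains precisely the individuals attaining the global maximum on the first case of the permutation, and the survivor pool only shrinks thereafter. The non-domination half rests on the right induction: a strict dominator $y_k$ passes every filter that $y_i$ passes (both being pool members, $f_j(y_k)\geq f_j(y_i)=$ pool maximum forces equality), so $y_i$ is never the unique survivor and could be returned only by surviving all $N$ cases; but then the distinguishing case $x_m$ is processed while both are alive, which eliminates $y_i$ --- a contradiction. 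Your invocation of Lemma~\ref{lem:strict} and the paper's standing distinctness assumption to upgrade domination to strict domination is also the correct move, and your observation that monotonicity of the pool collapses the two termination modes (unique survivor versus exhausted cases) into the same contradiction is precisely the bookkeeping this argument needs. In short, you have given a sound first-principles proof of a statement the paper merely cites, following the same elitism-plus-non-domination structure as the original proof in \citet{la2019probabilistic}; nothing needs repair.
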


\subsection{Obtaining Pareto Set Boundaries through Pairwise Comparisons}
\label{sec:pairwise}

Suppose we have the global best fitness of $x_j$ over population $\mathcal{Y}$ as $f^*_j(\mathcal{Y}) = \max_{y_i\in \mathcal{Y}} f_j(y_i)$, we show that an individual must achieve global best fitness on at least one case in order to be selected by lexicase selection.

\begin{lemma}
  \label{lem:1}
  If an individual $y_i \in \mathcal{Y}$ is selected by lexicase selection, then $\exists j\in\{1,\cdots,N\}$ for which $f_j(y_i) = f^*_j(\mathcal{Y})$.
\end{lemma}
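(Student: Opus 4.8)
The plan is to prove the contrapositive: I would show that if an individual $y_i$ fails to achieve the global best fitness on \emph{every} training case, then $y_i$ cannot be selected by lexicase selection. This reformulation is natural because Theorem~\ref{thm:pareto} already tells us that any selected individual must be a Pareto set boundary, and the definition of a Pareto set boundary (Def.~\ref{def:paretobound}) requires precisely that $y_i$ attain $f_j(y_i) \geq \max_{y_k \in \mathcal{Y}} f_j(y_k) = f^*_j(\mathcal{Y})$ on at least one case $j$. So the statement is essentially an unpacking of what it means to be a Pareto set boundary.

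Concretely, first I would invoke Theorem~\ref{thm:pareto}: since $y_i$ is selected by lexicase selection, $y_i$ is a Pareto set boundary of $\mathcal{Y}$. Second, I would apply Definition~\ref{def:paretobound} directly, which guarantees the existence of some $j \in \{1,\dots,N\}$ with $f_j(y_i) \geq \max_{y_k \in \mathcal{Y}} f_j(y_k)$. Third, I would note that since $f^*_j(\mathcal{Y}) = \max_{y_k \in \mathcal{Y}} f_j(y_k)$ is by definition the maximum over the whole population, we trivially have $f_j(y_i) \leq f^*_j(\mathcal{Y})$; combining this with the boundary inequality forces the equality $f_j(y_i) = f^*_j(\mathcal{Y})$, which is exactly the claim.

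Given this structure, there is essentially no hard step — the lemma is almost immediate once Theorem~\ref{thm:pareto} is granted, since it just rephrases the defining property of a Pareto set boundary in terms of the notation $f^*_j(\mathcal{Y})$. The only point requiring any care is the direction of the inequality in Def.~\ref{def:paretobound}: one must observe that the stated $\geq$ against the population maximum can only be satisfied as an equality, because no individual's fitness can strictly exceed the population-wide maximum on a given case. If I wanted to avoid leaning on Theorem~\ref{thm:pareto} as a black box, the alternative would be a direct argument on the lexicase procedure itself: an individual surviving to be selected must not be eliminated at any step, and on whatever case appears first in the random ordering it must tie the best-performing survivors; tracing this back shows it must match the global best on the first case of its successful ordering. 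But the cleanest route is simply to cite the already-established Theorem~\ref{thm:pareto} and read off the conclusion from Def.~\ref{def:paretobound}.
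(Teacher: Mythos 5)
Your proof is correct and follows essentially the same route as the paper's: invoke Theorem~\ref{thm:pareto} to conclude $y_i$ is a Pareto set boundary, then read off the conclusion from Definition~\ref{def:paretobound}. In fact, you are slightly more careful than the paper, which writes the boundary condition only as an inequality $f_j(y_i) \geq f^*_j(\mathcal{Y})$ and leaves implicit the step you make explicit---that $f_j(y_i) \leq f^*_j(\mathcal{Y})$ holds trivially since the maximum is taken over a population containing $y_i$, forcing equality.
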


\begin{proof}
  According to Theorem~\ref{thm:pareto}, $y_i$ must be a Pareto set boundary, \ie, $\exists j\in\{1,\cdots,N\}$ for which $f_j(y_i) \geq \max_{y_k\in \mathcal{Y}} = f^*_j(\mathcal{Y})$.
\end{proof}

Besides, we show that the number of elitism $E(y_i)=|\{f_j(y_i) = f^*_j(\mathcal{Y}) \ \forall j \in \{1,\cdots,N\}\}|$, \ie, the number of cases that an individual achieves global best fitness on, can be useful in determining dominance.

\begin{theorem}
  \label{thm:elitism}
  For two individuals $y_1, y_2 \in \mathcal{Y}$, if $E(y_1)>E(y_2)$, $y_1 \nprec y_2$.
\end{theorem}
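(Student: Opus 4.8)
The plan is to argue by contradiction, assuming the conclusion fails and deriving a violation of the hypothesis $E(y_1) > E(y_2)$. Concretely, I would suppose that $y_1 \prec y_2$ holds, \ie, $y_2 \succ y_1$, so that $y_2$ strictly dominates $y_1$. Unpacking this via Definition~\ref{def:dominate} immediately yields the pointwise inequality $f_j(y_2) \geq f_j(y_1)$ for every $j \in \{1,\ldots,N\}$; the strict part of Definition~\ref{def:strictdomination} will not even be needed.

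The heart of the argument is a set-inclusion claim: every case on which $y_1$ is elite is also a case on which $y_2$ is elite. To establish it, I would fix an arbitrary case $j$ with $f_j(y_1) = f^*_j(\mathcal{Y})$. Combining the domination inequality $f_j(y_2) \geq f_j(y_1)$ with $f_j(y_1) = f^*_j(\mathcal{Y})$ gives $f_j(y_2) \geq f^*_j(\mathcal{Y})$. But $f^*_j(\mathcal{Y}) = \max_{y_k \in \mathcal{Y}} f_j(y_k)$ is by definition an upper bound for $f_j(y_2)$, so $f_j(y_2) \leq f^*_j(\mathcal{Y})$ as well. These two inequalities sandwich $f_j(y_2)$ and force $f_j(y_2) = f^*_j(\mathcal{Y})$, meaning $y_2$ is elite on case $j$ too.

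Having shown that the elite-case set of $y_1$ is contained in that of $y_2$, I would conclude by counting: the elitism counts obey $E(y_2) \geq E(y_1)$, which directly contradicts the hypothesis $E(y_1) > E(y_2)$. Hence the assumption $y_2 \succ y_1$ is untenable, and therefore $y_1 \nprec y_2$, as claimed.

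I expect the sandwiching step to be the only substantive part of the proof, and it is really the main obstacle only in the sense of being the key idea: recognizing that domination propagates eliteness, because a dominating individual matching a global maximum must itself attain that maximum. Everything else, \ie, the contradiction setup and the final counting, is routine bookkeeping. One point worth stating carefully is that the argument uses only weak domination (Definition~\ref{def:dominate}) in the inequality direction, so the result does not secretly rely on the distinct-fitness assumption or on Lemma~\ref{lem:strict}; this makes the statement clean and its use later robust.
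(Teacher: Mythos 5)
Your proof is correct and is essentially the paper's argument run in contrapositive form: the paper uses the count hypothesis $E(y_1) > E(y_2)$ to directly exhibit (by pigeonhole) a case $j$ on which $y_1$ is elite but $y_2$ is not, so that $f_j(y_1) = f^*_j(\mathcal{Y}) > f_j(y_2)$ rules out $y_2 \succeq y_1$ via Definition~\ref{def:dominate}, while you assume domination, show eliteness propagates through your sandwich step, and contradict the same count --- identical mathematical content, opposite direction. Your closing remark is also accurate: like the paper's proof, the argument needs only weak domination and never invokes Lemma~\ref{lem:strict} or the distinct-fitness assumption.
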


\begin{proof}
  Given that $E(y_1)>E(y_2)$, there must $\exists j\in\{1,\cdots,N\}$ for which $f_j(y_1) = f^*_j(\mathcal{Y}) > f_j(y_2)$. According to Definition~\ref{def:dominate}, we have $y_1 \nprec y_2$.
\end{proof}

\begin{algorithm}[t]
  \KwData{
    \begin{itemize}
      \item $f(y_i)$: the fitness vector of individuals $y_i \in \mathcal{Y}$
      \item $\mathcal{Y}$: the population of individuals
    \end{itemize}
  }
  \KwResult{
    \begin{itemize}
      \item $\mathcal{P}$: a set of individuals that are Pareto set boundaries
    \end{itemize}
  }

  $f^*(\mathcal{Y}) \gets \max_{i}f(y_i)$

  $\mathcal{P} \gets$ $\mathcal{Y}$ sorted by number of elitism $E(y_i)$

  \For{$y_i$ in $\mathcal{P}$}{
    \If{$E(y_i)==0$}{
      Remove $y_i$ from $\mathcal{P}$
    }\Else{
      \texttt{individuals\_to\_compare} $\gets \{y_k\} \ \forall E(y_k) \leq E(y_i)$.
      \For{$y_k$ in \texttt{individuals\_to\_compare}}{
        \If{$\sum{f(y_i) < f(y_k)}=0$}{
          Remove $y_k$ from $\mathcal{P}$
        }
      }
    }
  }

  \KwRet{$\mathcal{P}$}

  \caption{Find Pareto Set Boundaries through
    Pairwise Comparisons}
  \label{alg:pareto}
\end{algorithm}

We can also extend Theorem~\ref{thm:pareto} to the case of pairwise comparison of two individuals.

\begin{lemma}
  \label{lem:2}
  Given two individuals $y_1, y_2 \in \mathcal{Y}$, $y_2$ can not be selected by lexicase selection if $y_1 \succ y_2$.
\end{lemma}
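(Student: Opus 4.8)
The plan is to prove Lemma~\ref{lem:2} by connecting strict domination to the characterization of selectable individuals already established. The statement asserts that if $y_1 \succ y_2$, then $y_2$ cannot be selected by lexicase selection. My strategy is to argue via the contrapositive structure: I will assume $y_1 \succ y_2$ and show that $y_2$ fails to be a Pareto set boundary, which by Theorem~\ref{thm:pareto} immediately rules it out of the selectable set.

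First I would unpack the hypothesis $y_1 \succ y_2$ using Definition~\ref{def:strictdomination}, which gives $f_j(y_1) \geq f_j(y_2)$ for all $j \in \{1,\dots,N\}$. The key observation is that this inequality, combined with the definition of the global best fitness $f^*_j(\mathcal{Y}) = \max_{y_k \in \mathcal{Y}} f_j(y_k)$, forces $f_j(y_2) \leq f_j(y_1) \leq f^*_j(\mathcal{Y})$ on every case. Now I need to rule out equality everywhere, i.e.\ show that $y_2$ cannot attain $f^*_j(\mathcal{Y})$ on any case. The natural route is Definition~\ref{def:paretobound}: a Pareto set boundary must satisfy $f_j(y_i) \geq f^*_j(\mathcal{Y})$ for at least one $j$. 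If $y_2$ were such a boundary, there would exist some case $j$ with $f_j(y_2) = f^*_j(\mathcal{Y})$ (equality, since it cannot exceed the max). But then $f_j(y_1) \geq f_j(y_2) = f^*_j(\mathcal{Y})$ as well, which is consistent and does not yet contradict anything — so this direct attempt needs refinement.

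The cleaner approach, which I expect to be the intended one, is to simply invoke that strict domination excludes $y_2$ from the Pareto set itself. By Definition~\ref{def:pareto}, $y_2 \in \mathcal{P}$ requires $y_2 \nprec y_k$ for all $y_k \in \mathcal{Y}$; but the hypothesis $y_1 \succ y_2$ is precisely the statement $y_2 \prec y_1$, so $y_2 \notin \mathcal{P}$. Since every Pareto set boundary lies in $\mathcal{P}$ (Definition~\ref{def:paretobound} quantifies over $y_i \in \mathcal{P}$), $y_2$ is not a Pareto set boundary. Theorem~\ref{thm:pareto} then tells us that any individual selected by lexicase selection is a Pareto set boundary, so $y_2$ cannot be selected, completing the argument.

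The main obstacle here is not mathematical depth — the logical chain is short — but rather making sure the quantifier bookkeeping is airtight, particularly the interplay between $\succeq$, $\succ$, and $\nprec$, and confirming that being strictly dominated genuinely disqualifies an individual from $\mathcal{P}$ rather than merely from the boundary. I would be careful to state explicitly that $y_1 \succ y_2$ is equivalent to $y_2 \prec y_1$, so that Definition~\ref{def:pareto}'s condition is directly violated. Everything else then follows mechanically from the already-established Theorem~\ref{thm:pareto}, making this lemma a short corollary-style consequence rather than an independent result requiring new techniques.
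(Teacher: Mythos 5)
Your proof is correct and follows essentially the same route as the paper: the hypothesis $y_1 \succ y_2$ violates the non-domination condition of Definition~\ref{def:pareto}, so $y_2$ is not in the Pareto set (hence not a Pareto set boundary), and Theorem~\ref{thm:pareto} then rules out its selection. Your version merely spells out the intermediate step (boundaries lie in $\mathcal{P}$) that the paper leaves implicit, and correctly discards your initial case-wise fitness attempt in favor of this argument.
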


\begin{proof}
  If $y_1 \succ y_2$, $y_2$ is not in the Pareto set according to Definition~\ref{def:pareto}. So $y_2$ can not be selected by lexicase selection according to Theorem~\ref{thm:pareto}.
\end{proof}

Combining Lemma~\ref{lem:1} and Lemma~\ref{lem:2}, we show a way of determining whether an individual can be selected by lexicase selection through global best fitness and pairwise comparisons.

\begin{theorem}
  \label{thm:lexiprob}
  An individual $y_i \in \mathcal{Y}$ is a Pareto set boundary, \ie, can be selected by lexicase selection, only if the following two conditions are met:
  \begin{enumerate}
    \item $\exists j\in\{1,\cdots,N\}$ for which $f_j(y_i) = f^*_j(\mathcal{Y})$.
    \item $y_i \nprec y_k \ \forall y_k \in \mathcal{Y}$.
  \end{enumerate}
\end{theorem}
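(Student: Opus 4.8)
The plan is to recognize this theorem as a direct corollary obtained by combining the two preceding lemmas, since each of the two numbered conditions is supplied by exactly one of them. I would first assume the hypothesis --- that $y_i$ is selected by lexicase selection --- and then establish each necessary condition separately.

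For Condition~1, I would invoke Lemma~\ref{lem:1} verbatim: it already states that any individual selected by lexicase selection attains the global best fitness $f^*_j(\mathcal{Y})$ on at least one case $j$, which is precisely Condition~1. No further argument is needed here.

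For Condition~2, I would argue by contrapositive using Lemma~\ref{lem:2}. Suppose Condition~2 fails, so that there exists some $y_k \in \mathcal{Y}$ with $y_k \succ y_i$ (equivalently $y_i \prec y_k$). Lemma~\ref{lem:2} then guarantees that $y_i$ cannot be selected by lexicase selection, contradicting the assumption. Hence no individual strictly dominates $y_i$, i.e.\ $y_i \nprec y_k$ for all $y_k \in \mathcal{Y}$, which is Condition~2. Equivalently, one could route through Theorem~\ref{thm:pareto}: a selected individual is a Pareto set boundary, hence a member of the Pareto set $\mathcal{P}$, and Definition~\ref{def:pareto} then yields non-domination directly.

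The main thing to watch --- rather than a genuine obstacle --- is the bookkeeping of the necessity direction and the orientation of the $\prec$/$\succ$ relation: the theorem claims only that these conditions are \emph{necessary} (the ``only if''), so I must drive the implication from selectability to the conditions and not the reverse. It is also worth noting that, under the standing distinctness assumption, Conditions~1 and~2 together are just a restatement of Definition~\ref{def:paretobound}, so the combined conditions are in fact equivalent to being a Pareto set boundary; but establishing the stated necessity requires only the two lemmas above.
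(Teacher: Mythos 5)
Your proof is correct and takes essentially the same approach as the paper: the paper gives no separate proof for this theorem, stating only that it follows by ``combining Lemma~\ref{lem:1} and Lemma~\ref{lem:2},'' which is exactly the decomposition you carry out (Condition~1 directly from Lemma~\ref{lem:1}, Condition~2 from Lemma~\ref{lem:2} by contraposition). Your care with the orientation of $\prec$/$\succ$ and with the necessity-only direction fills in the details the paper leaves implicit.
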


We hereby propose an efficient algorithm to find Pareto set boundaries in a population through vectorized pairwise comparisons. This step helps us identify individuals with zero and non-zero probabilities to be selected by lexicase selection.

The basic idea is to check each individual in the population if Theorem~\ref{thm:lexiprob} holds. It is clear that if the $E(y_i)=0$, $y_i$ can not be selected by lexicase selection (due to the second condition). For the pairwise comparisons, since one individual may dominate multiple individuals, and we can remove the individual from the comparison pool once it is dominated (due to the first condition), it is better to start with the best individual and compare it against others. So we start by sorting the individuals by their number of elitism $E(y_i)$, as individuals with high elitism are more likely to dominate others. Another benefit of sorting is to reduce the number of comparisons. As shown in Theorem~\ref{thm:elitism}, if $E(y_1)>E(y_2)$, $y_1 \nprec y_2$. So starting from the first individual in the list sorted by $E(y_i)$ from high to low, we only need to compare the current individual with individuals that have equal or lower $E(y_i)$.

The comparisons are performed in a vectorized fashion. Given the fitness vector $f(y_i) = [f_1(y_i), f_2(y_i),\cdots,f_N(y_i)]^T$, if for any case $j$ we have $f_j(y_1) \geq f_j(y_2)$, then we can infer that $y_1 \succeq y_2$. We use vectorized comparison to check if $\sum{f(y_1) < f(y_2)}=0$. Since the fitness vectors are all distinct after preselection, according to Lemma~\ref{lem:strict}, we have $y_1 \succ y_2$.

During the iterations, if an individual is found to be dominated, we record that and also remove it from the sorted list. This is because dominance follows transitive property, \ie, if $y_i$ is dominated by $y_k$, any of its dominants are also dominated by $y_k$, so removing $y_i$ will avoid unnecessary comparisons.

\subsection{Assigning Probabilities to Individuals}

After getting the Pareto set boundaries, we already know which individuals will and will not be selected by lexicase selection, and we can assign a selection probability of zero to individuals that are not in the Pareto set boundaries. The next step is to determine the selection probabilities for Pareto set boundaries.

It has been proved that calculating the exact probabilities is an NP-Hard problem~\cite{dolson2023calculating}. So here we propose an efficient algorithm to alternatively approximate the probabilities. Our hypothesis for the approximation is: for any training case, suppose there are $k$ individuals being elites in that case, the probability of which individual to be selected is based on their total elitism, \ie, the number of best fitness achieved. The intuition is that when we look at the training first case in the sequence of a lexicase selection event, suppose $k$ individuals tie on this training case, then instead of going to the next case like normal lexicase selection, we assume that whichever case has a higher chance to be the best on the next case is more likely to be finally selected. After computing the distribution for each individual on each training case, we average over all the cases to get the final distribution.

More formally, the unnormalized probability density function for an individual $y_i$ in the population $\mathcal{Y}$ on the $j$th training case is given by
\begin{equation}
  \label{eq:prob1}
  h_j(y_i)= \begin{cases}
  E(y_i) & \text{if } f_j(y_i) = f_j^*(\mathcal{Y}) \\
  0 & \text{otherwise}
  \end{cases}.
\end{equation}

The probability distribution of selecting individuals on the $j$th training case is
\begin{equation}
  \label{eq:prob2}
  P_j(y_i) = \frac{h_j(y_i)}{\sum_{y_k \in \mathcal{Y}} h_j(y_k)}.
\end{equation}

Finally, the probability distribution of selecting individuals is averaged over all the training cases,
\begin{equation}
  \label{eq:prob3}
  P(y_i) = \frac{\sum_{j=1}^{N} P_j(y_i)}{N},
\end{equation}
which is used to assign probabilities in the proposed plexicase selection.

\begin{table*}[t]
  \caption{Results on program synthesis benchmark problems. We report the number of successes over 100 runs. With fixed $\alpha=1$, plexicase selection is competitive to lexicase selection on all the problems. With probability manipulation (selecting the best $\alpha\in\{0.5,1,2\}$), plexicase selection outperforms lexicase selection on some problems. We also perform Pearson's Chi-squared tests to compare the number of successes of plexicase selection against lexicase selection, and the results are marked with significance levels ($P\leq 0.05$*, $0.01$**, $0.001$***).}
  \label{tab:gp}
  \begin{center}
    \begin{tabular}{l|ccc|ccc}
      \toprule
      \multirow{2}{*}{Problem}   & \multicolumn{3}{c|}{Regular} & \multicolumn{3}{c}{Downsampled}                                                                         \\
      \cmidrule(lr){2-4}\cmidrule(lr){5-7}
                                 & Lexicase                     & Plexicase ($\alpha=1$)          & Plexicase    & Lexicase     & Plexicase ($\alpha=1$) & Plexicase      \\
      \midrule
      compare-string-lengths     & 0                            & 0                               & 0            & 0            & 0                      & 0              \\
      median                     & 89                           & 83                              & \textbf{96}  & 70           & 88**                   & \textbf{98}*** \\
      negative-to-zero           & 79                           & 80                              & \textbf{85}  & 76           & 90*                    & \textbf{94}**  \\
      number-io                  & 99                           & \textbf{100}                    & \textbf{100} & 99           & \textbf{100}           & \textbf{100}   \\
      replace-space-with-newline & 10                           & 6                               & \textbf{13}  & 11           & 7                      & \textbf{12}    \\
      smallest                   & 100                          & 100                             & 100          & \textbf{100} & 99                     & \textbf{100}   \\
      vector-average             & \textbf{100}                 & 99                              & \textbf{100} & 100          & 100                    & 100            \\
      \bottomrule
    \end{tabular}
  \end{center}
\end{table*}

\subsection{Probabilistic Lexicase Selection}
\label{sec:pls}

With aforementioned definitions in mind, we introduce probabilistic lexicase selection (plexicase selection). In general, plexicase selection first find the Pareto set boundaries through pairwise comparisons, then assign probabilities to each individual to form the probability distribution of selection. Finally, the parent selection is performed by sampling from the generated distribution. The complete algorithm is outlined in Algorithm~\ref{alg:method}.

\begin{algorithm}[t]
  \KwData{
    \begin{itemize}
      \item $f(y_i)$: the fitness vector of individuals $y_i \in \mathcal{Y}$
      \item $\mathcal{Y}$: the population of individuals
    \end{itemize}
  }
  \KwResult{
    \begin{itemize}
      \item $Pa$: individuals that are selected as parents
    \end{itemize}
  }

  $\mathcal{P} \gets$ Find Pareto Set Boundaries (Algorithm~\ref{alg:pareto})

  \For{$y_i$ in $\mathcal{Y}$}{
    \If{$y_i \in \mathcal{P}$}{
      Calculate $P(y_i)$ (Equation~\ref{eq:prob1},\ref{eq:prob2},\ref{eq:prob3})
    }\Else{$P(y_i) = 0$}
  }

  $Pa \gets$ sample parents from the calculated distribution $P(\mathcal{Y})$

  \KwRet{$Pa$}

  \caption{Probabilistic Lexicase Selection}
  \label{alg:method}
\end{algorithm}

There are two major benefits of our method, which samples parents from a calculated distribution instead of running repeated selection events like the original lexicase selection. First, once the probabilities are calculated, sampling can be efficiently performed to obtain numerous parents from the distribution. The worst-case runtime of our method is the same as running lexicase selection for just one selection event. In practice, we observe that our method is significantly faster than lexicase selection in various tasks. Secondly, having a probabilistic representation of the individuals being selected can provide us with flexibility to further apply parametric operations to improve the performance.

In particular, we propose a probability manipulation strategy to control the kurtosis (tailedness) of the selection distribution, \ie, the randomness in multiple selection events. We introduce a hyperparameter $\alpha \geq 0$, which acts similarly to a temperature parameter in the Softmax function. After calculating the probability of selecting each individual $P(y_i)$, we perform the following probability manipulation:
\begin{equation}
  P'(y_i) = \frac{P(y_i)^\alpha}{\sum_{y_i\in\mathcal{Y}}P(y_i)^\alpha}
\end{equation}
where $P'(y_i)$ is the new probability distribution to be used in plexicase selection. 

Intuitively, $\alpha$ controls the difference in probabilities of each individual to be selected. When $\alpha=1$, there is no change in the distribution. When $\alpha$ is small, the probabilities are more evenly distributed, (\eg, at the extreme case when $\alpha=0$, all the individuals that are Pareto set boundaries will be selected uniformly at random), whereas when $\alpha$ is large, the probabilities are more skewed towards the elites.

Note that such manipulation does not change the probability of individuals with an initial probability of zero, so it keeps the basic terminology of selection that only Pareto set boundaries will be selected, as stated in Theorem~\ref{thm:pareto}.

\subsection{Extension with $\epsilon$-Relaxation}

The proposed plexicase selection can be easily extended to its $\epsilon$-relaxed form, namely $\epsilon$-plexicase selection, to handle tasks where the fitness measure is in the continuous space. In particular, $\epsilon$ refers to the dynamic relaxation on elitism proposed in prior work~\cite{la2016epsilon}. We briefly describe the key modifications needed for the extension to $\epsilon$-plexicase selection.

Firstly, most of the definitions in Section~\ref{sec:pairwise} can be extended to their $\epsilon$-relaxed forms. In general, we have the following definition regarding $\epsilon$-domination:

\begin{definition}
  For individuals $y_1, y_2 \in \mathcal{Y}$, if $f_j(y_1) - \epsilon \geq f_j(y_2) \ \forall j \in \{1,\cdots,N\}$, we state $y_1 \succeq y_2$ ($y_1$ $\epsilon$-dominates $y_2$).
\end{definition}

Similarly, Algorithm~\ref{alg:pareto} can be extended to find $\epsilon$-relaxed Pareto set boundaries if we use $\epsilon$-domination to address elitism and domination. By replacing Algorithm~\ref{alg:pareto} with its $\epsilon$-relaxed form in Algorithm~\ref{alg:method}, we get the $\epsilon$-relaxed form of plexicase selection, \ie, $\epsilon$-plexicase selection.

\section{Experiments}
\label{sec:4}

We conduct experiments to validate the proposed method in two domains: program synthesis and symbolic regression, in which lexicase selection has been demonstrated to be a state-of-the-art approach. In this section, we describe the implementation details and experimental results.

\subsection{Datasets and Benchmarks}

For program synthesis, we use a sample of problems taken from the General Program Synthesis Benchmark Suite~\cite{helmuth2015general}, which is a standard benchmark for testing GP systems for program synthesis. The problems contained in this suite were sourced from introductory computer science textbooks and programming competitions. This aligns the complexity of the problems with the average skill of a beginner human programmer. Solutions to these problems require the use of various data types, control flow, and basic data structures. We have selected the same subset of 7 problems used in \cite{pantridge2020code} out of the full PSB suite in order to compare the change in performance when using plexicase. State-of-the-art solution rates to these problems for PushGP~\cite{helmuth2022problem}, Grammar Guided Genetic Programming~\cite{forstenlechner2018extending}, and Code Building GP~\cite{pantridge2022functional} show that this subset of the PSB covers a range of difficulties regardless of the GP method.

Another domain that receives rising interest from the GP and ML community is symbolic regression. We adopt the recently proposed SRBench~\cite{la2021contemporary}, which is a large-scale, open-source, reproducible benchmarking platform for symbolic regression. In particular, we use a subset of 20 black-box regression problems with sample sizes varying from 40 to 200. These regression problems consist of both real-world problems (data obtained from physical observations) and synthetic problems (data generated from static functions or simulations), covering diverse domains such as health informatics, business, environmental science, and economics.

\subsection{Implementation Details}

For program synthesis, we implement our algorithm in the Code Building Genetic Programming (CBGP) framework~\cite{pantridge2020code}. CBGP is a general program synthesis system that works with arbitrary data types and structures, and shows competitive problem-solving performance. The original CBGP system uses lexicase selection as the default parent selection method, and we compare it with plexicase selection. We follow the original CBGP paper~\cite{pantridge2020code} to set the system configurations: for each problem, we perform $100$ runs of the system with a population size of $1000$ for $300$ generations; the variation operator used in all the runs is UMAD~\cite{helmuth2018program}. For plexicase selection, we use three different $\alpha=0.5,1,2$ for probability manipulation and report the best result for each problem out of the three configurations.

For symbolic regression, we start with the \textit{gplearn}\footnote{https://github.com/trevorstephens/gplearn} framework, which originally uses tournament selection. Since symbolic regression problems need to optimize with continuous values, we implement both $\epsilon$-lexicase selection and $\epsilon$-plexicase selection, and compare them against the tournament selection baseline. For tournament and $\epsilon$-lexicase selection, we adopt the same configurations as in SRBench~\cite{la2021contemporary}, which are $6$ combinations of hyperparameters with $3$ population sizes of $1000,500,100$ (the according numbers of generations are $500,1000,5000$) and $2$ function sets. The tournament size is $20$ for all the problems. For $\epsilon$-plexicase selection, we include different $\alpha$ values ($0.5$, $1$, and $2$) as hyperparameters, but we limit the number of combinations to six for fair comparisons with other methods. Each method is tested for $10$ repeated trials on each of the $20$ problem ($200$ trials in total) with different random seeds controlling both the train/test split and the initialization of the algorithm.

\begin{figure}[t]
  \begin{center}
    \includegraphics[width=.99\linewidth]{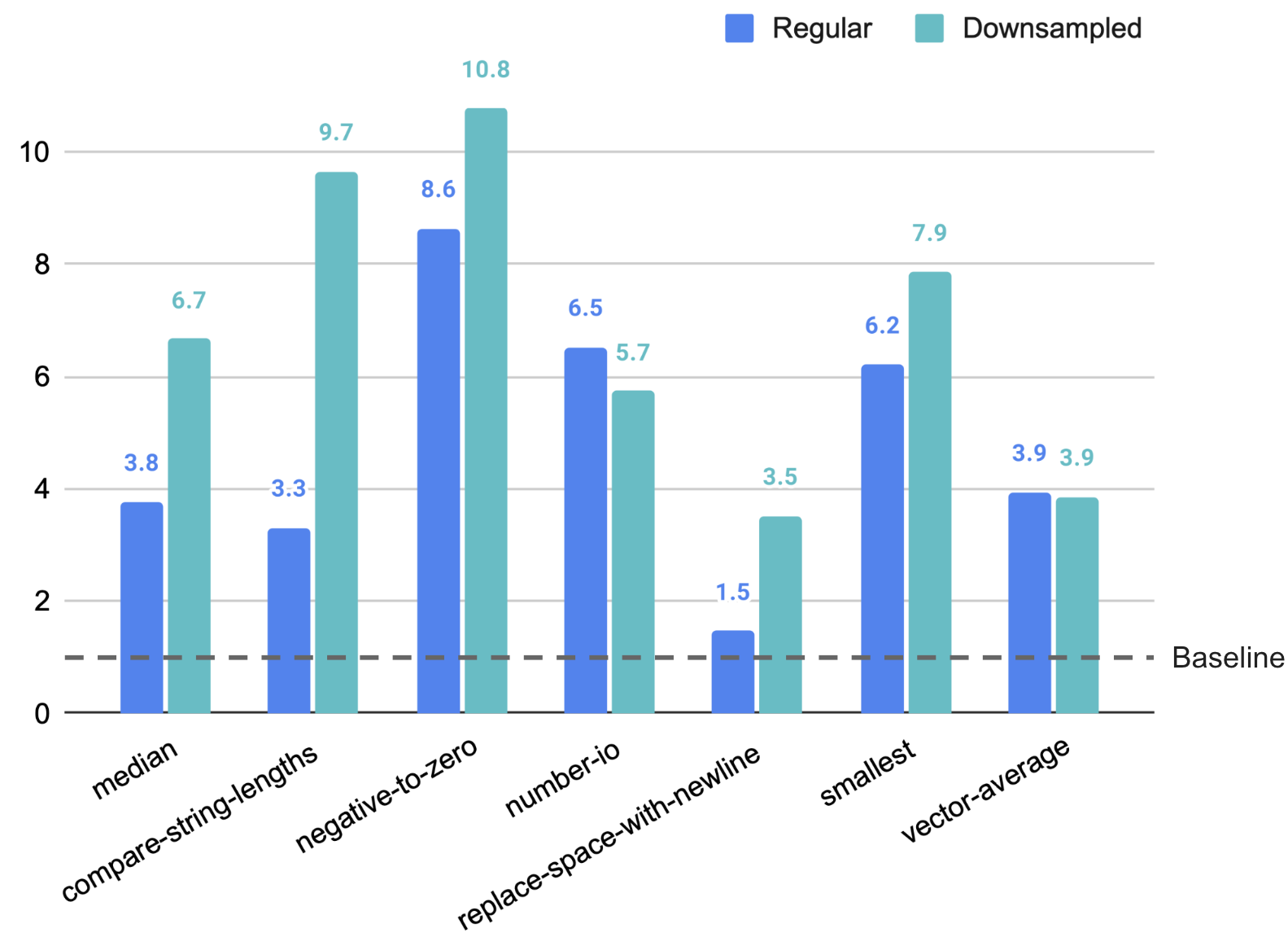}
  \end{center}
  \caption{Average runtime speed-up of plexicase selection compared to lexicase selection in paired comparisons. A speed-up of $n$ means the plexicase selection runtime is $1/n$ of the lexicase selection runtime. The comparisons are conducted in both regular and downsampled circumstances.
  }
  \label{fig:gptime}
\end{figure}

\begin{table}[t]
  \caption{Ablation study on different $\alpha$ values for probability manipulation of plexicase selection on program synthesis benchmark problems. We report the number of successes over 100 runs.}
  \label{tab:gpalpha}
  \begin{center}
    \begin{tabular}{lccc}
      \toprule
      \multirow{2}{*}{Problem}   & \multicolumn{3}{c}{Plexicase (fixed $\alpha$)}                               \\
      \cmidrule(lr){2-4}
                                 & $\alpha=0.5$                                   & $\alpha=1$   & $\alpha=2$   \\
      \midrule
      compare-string-lengths     & 0                                              & 0            & 0            \\
      median                     & 61                                             & 83           & \textbf{96}  \\
      negative-to-zero           & \textbf{85}                                    & 80           & 70           \\
      number-io                  & 99                                             & \textbf{100} & \textbf{100} \\
      replace-space-with-newline & 4                                              & 6            & \textbf{13}  \\
      smallest                   & 98                                             & \textbf{100} & \textbf{100} \\
      vector-average             & \textbf{100}                                   & 99           & \textbf{100} \\
      \bottomrule
    \end{tabular}
  \end{center}
\end{table}

\subsection{Results on Program Synthesis}

\subsubsection{Problem-solving performance}

Table~\ref{tab:gp} shows the problem-solving performance of plexicase selection compared to the regular lexicase selection. We report the number of successes over $100$ runs. With fixed $\alpha=1$, plexicase is competitive to lexicase on all the problems. With probability manipulation (selecting the best $\alpha\in\{0.5,1,2\}$), our method is able to outperform lexicase on some problems. We also perform Pearson's Chi-squared tests~\cite{pearson1900x} to compare plexicase selection against lexicase selection, showing that there is no significant difference in problem-solving performance between regular lexicase and plexicase selection.

Recent work~\cite{hernandez2019random,helmuth2022problem} has demonstrated that the advantages of lexicase selection can be potentially amplified by downsampling. Following this trend, we also test the algorithms with a downsampling rate of $0.25$. The downsampling is performed before each generation to use only a subset of training cases. We observe that plexicase selection benefits more from downsampling, and is able to significantly outperform lexicase selection on $2$ problems with downsampling. Such results indicate that plexicase is likely to work well on larger-scale problems with downsampling.

\begin{figure*}[t]
  \begin{center}
    \includegraphics[width=.95\linewidth]{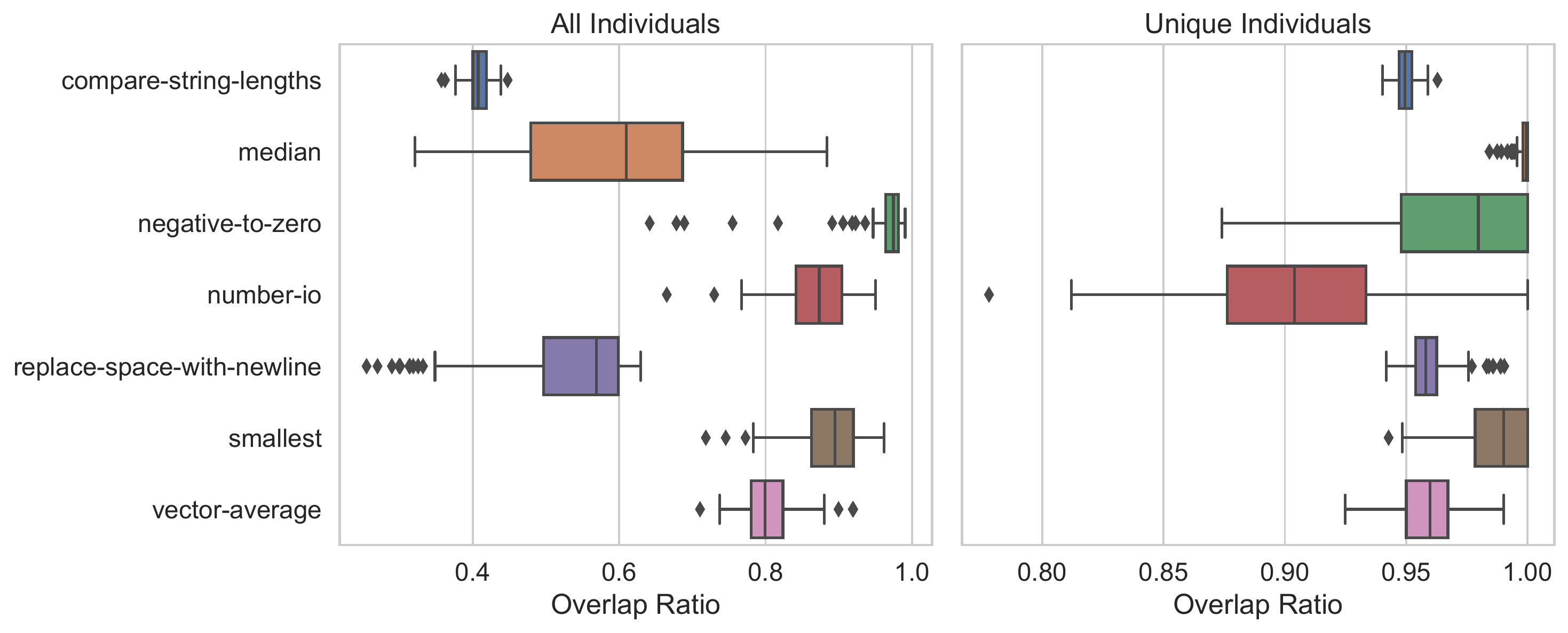}
  \end{center}
  \caption{Overlap between individuals selected by plexicase selection and those selected by lexicase selection. The ratios are calculated for all individuals as well as unique individuals.}
  \label{fig:gpoverlap}
\end{figure*}

\subsubsection{Runtime analysis}

Another important factor we care about is efficiency. In order to fairly compare the runtime of plexicase and lexicase selection, we design a paired-comparison experiment. During each generation, we run both lexicase selection and plexicase selection on the same population with the same training data, and record the runtime just for each selection algorithm.

Figure~\ref{fig:gptime} shows the average runtime speed-up of plexicase selection to lexicase selection in paired comparisons. A speed-up of $n$ means the runtime is $1/n$ of lexicase runtime. We can see that plexicase achieves significant speed-up on all the problems. With downsampling, the speed-up is further enhanced on most of the problems. Such results indicate that our method is more efficient than lexicase selection on the benchmark program synthesis problems, and the advantage is further enhanced with downsampling.

\subsubsection{Effect of probability manipulation}

As an ablation study, we report the detailed results of plexicase selection with different $\alpha$ values for probability manipulation. As shown in Table~\ref{tab:gpalpha}, for some problems, \eg, \texttt{median} and \texttt{replace-space-with-newline}, larger $\alpha$ value gives better results, but for other problems like \texttt{negative-to-zero}, small $\alpha$ works better. These observations indicate that different problems may favor different levels of kurtosis of the individual distribution. In other words, some problems require more exploitation with good solutions, others need more exploration with a large diversity of solutions. While lexicase selection does not have control over the trade-off between exploration and exploitation, our method manages to overcome this disadvantage and demonstrates better performance.

\begin{figure*}[t]
  \begin{center}
    \includegraphics[width=.79\linewidth]{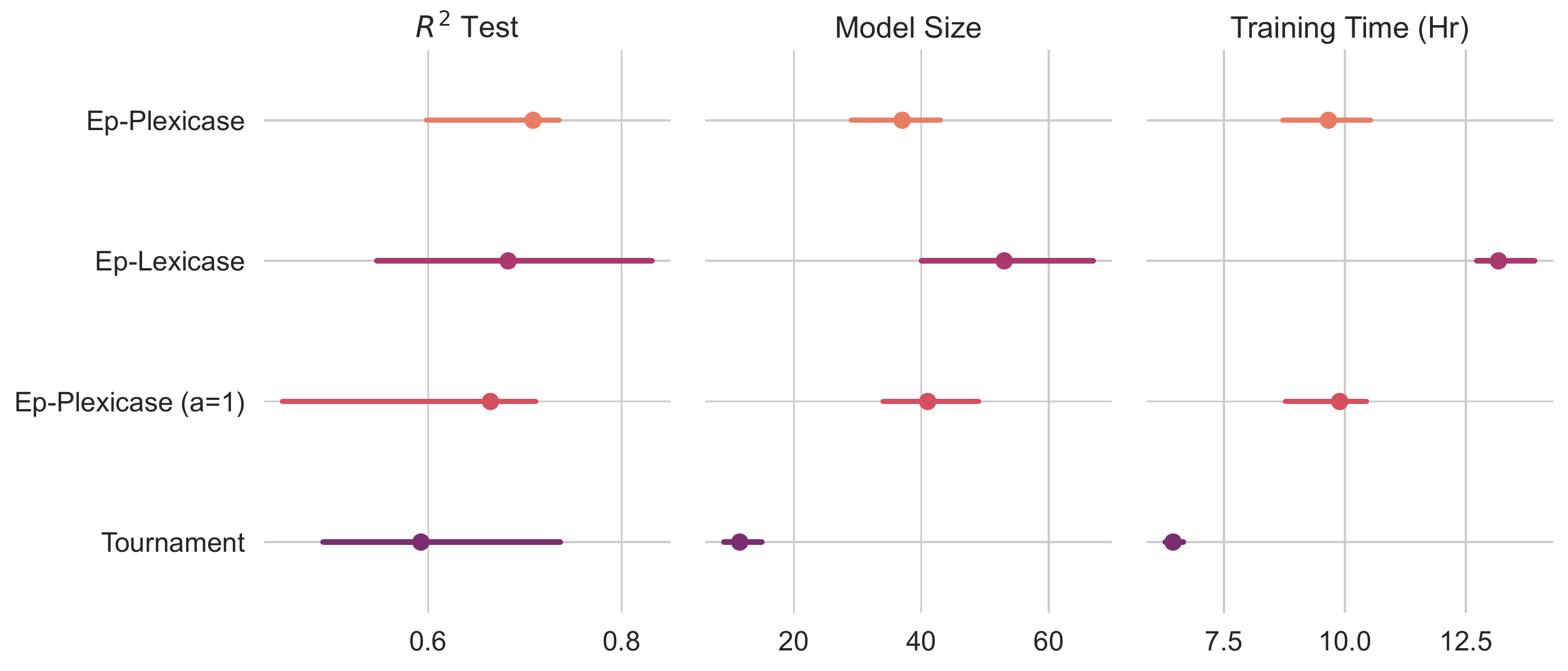}
  \end{center}
  \caption{Testing performance on the black-box benchmark problems. Each algorithm is evaluated for 200 trials in total. Points indicate the median values of each metric among all 20 problems, where for each problem we take the median value over 10 repeated trials. The bars show the 0.95 confidence interval obtained from bootstrapping. 
  }
  \label{fig:sr1}
\end{figure*}

\subsubsection{Selection similarity}

In order to validate how well plexicase selection is approximating the probabilities of selecting individuals with lexicase selection, we perform an ablation study on the similarity of individuals selected by two algorithms. The similarity is measured as the overlap ratio of individuals selected by both algorithms over those selected by lexicase selection. The overlap is separately calculated over all individuals as well as just unique individuals. We report the distribution of mean overlap ratios over generations for all runs, as summarized in Figure~\ref{fig:gpoverlap}. We can see that for $4$ problems, the overlap ratios of all individuals are above $0.8$, indicating that plexicase is approximating the distribution with considerable overlap. There also exists some variances of overlap ratios among different problems, which suggests that the approximation performance may be problem-dependent.

Similar observations have been found in recent work~\cite{helmuth2022population} that lexicase selection produces varying population diversity on different problems. However, the overlap ratios of unique individuals are almost always above $0.9$, indicating that most of the individuals selected by lexicase selection have been selected at least once by plexicase selection. This result validates the theoretical correctness of plexicase selection in obtaining the Pareto set boundaries as individuals that can be selected by lexicase selection.

\subsection{Results on Symbolic Regression}

\subsubsection{Performance on black-box regression problems}

We assess the performance of our method on symbolic regression problems in terms of both accuracy and complexity. For accuracy, we use the coefficient of determination, \ie, $R^2$, which is defined as
\begin{equation}
  R^2=\frac{\sum_i(y_i-\hat{y_i})^2}{\sum_i (y_i-\bar{y})^2}.
\end{equation}
For complexity, we follow SRBench and calculate the number of mathematical operators, features, and constants in the model.

The median performance over all 20 black-box benchmark problems is summarized in Figure~\ref{fig:sr1}. For each problem, we take the median performance over 10 repeated trials with different random seeds, following SRBench's recommended evaluation setting. We can see that using probability manipulation, $\epsilon$-plexicase selection slightly outperforms $\epsilon$-lexicase selection in regression accuracy, with significantly less model size and training time.

We also perform a Mood's median test~\cite{mood1950introduction} comparing the $R^2$ values of $\epsilon$-plexicase and $\epsilon$-lexicase selection. The resulting p-value is $0.8256$, indicating that there is no significant difference in performance between the two algorithms, \ie, $\epsilon$-plexicase well approximates the problem-solving capability of $\epsilon$-lexicase selection. In addition, the performance of $\epsilon$-plexicase selection may be further improved with more budget on hyperparameter tuning to compensate for the reduction in training time.

The results also show that $\epsilon$-plexicase selection has less variance in performance across different problems, compared to $\epsilon$-lexicase selection, indicating that our method is more stable and robust in solving the symbolic regression task in general. It is worth noting that the variances are inevitable since SRBench aggregates many ML problems and the problems are with varying difficulties.

\subsubsection{Runtime analysis}

Figure~\ref{fig:sr1} shows a significant reduction in the overall training time of $\epsilon$-plexicase selection compared to $\epsilon$-lexicase selection. It should be noted that training time includes other processes, such as evaluations of individuals, which is equal for both algorithms. This suggests that the improvement in the runtime of the selection process alone is substantial.

To further investigate the patterns of runtime improvement with $\epsilon$-plexicase selection, we plot the runtime of both methods separately for each problem, characterized by the total number of samples, as shown in Figure~\ref{fig:sr2}. The results indicate that $\epsilon$-plexicase selection consistently outperforms lexicase selection in terms of runtime across all problems, irrespective of the dataset size. This finding highlights the efficiency of our approach in solving the symbolic regression tasks.

\begin{figure}[t]
  \begin{center}
    \includegraphics[width=.99\linewidth]{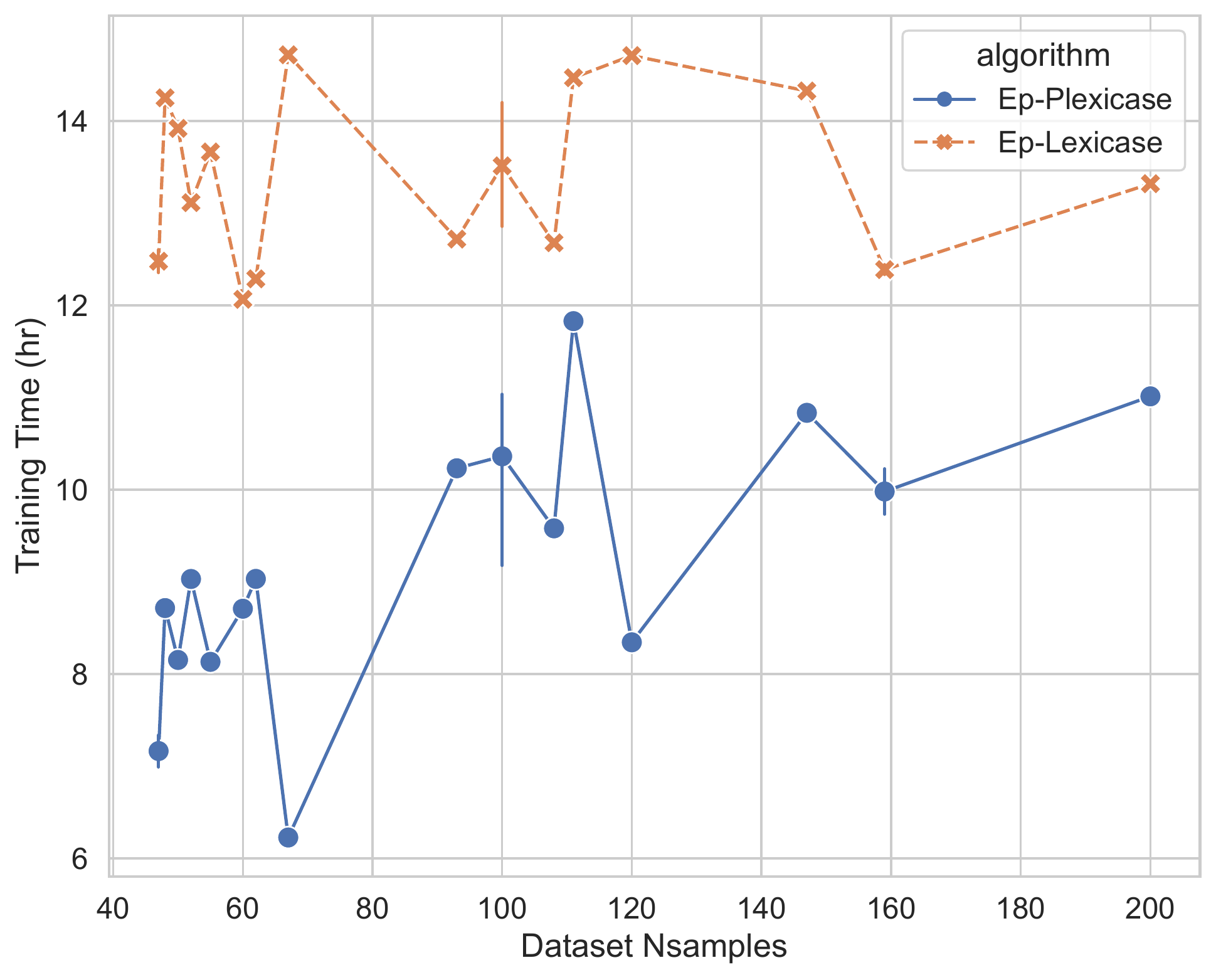}
  \end{center}
  \caption{Runtime comparison between $\epsilon$-plexicase selection and $\epsilon$-lexicase selection on symbolic regression problems.
  }
  \label{fig:sr2}
\end{figure}

\section{Conclusion and Future Work}

In this work, we introduce a novel parent selection method called Probabilistic Lexicase Selection, which efficiently approximates the probability distribution of lexicase selection. The proposed method not only demonstrates superior problem-solving capabilities as a semantic-aware selection method, but also benefits from having a probabilistic representation of selection for enhanced efficiency and flexibility. Specifically, the probability distribution enables us to efficiently sample numerous parents at once instead of performing repeated selection events. We also introduce a probability manipulation method to further enhance the problem-solving performance of plexicase selection by controlling the kurtosis of distribution.

To validate the proposed method, we conducted experiments in two prevalent task domains: program synthesis and symbolic regression. The empirical results on standard benchmarks demonstrate that our plexicase selection algorithm achieves state-of-the-art problem-solving performance comparable to lexicase selection while also surpassing lexicase selection in terms of computation efficiency. Ablation studies further investigate the effect of hyperparameters and validate the correctness of the proposed method.

One limitation of this work is that although we tested our method on two popular domains for genetic programming, the benchmark problems used in the experiments are relatively small-scale compared to modern machine learning tasks. In future work, we plan to extend plexicase selection to solve larger-scale tasks such as evolutionary optimization for deep neural networks~\cite{ding2022optimizing} and parameterized quantum computing models~\cite{ding2022evolutionary,ding2023multi}.

\begin{acks}
  This material is based upon work supported by the National Science Foundation under Grant No. 2117377. Any opinions, findings, and conclusions or recommendations expressed in this publication are those of the authors and do not necessarily reflect the views of the National Science Foundation. This work was performed in part using high performance computing equipment obtained from the Collaborative R\&D Fund managed by the Massachusetts Technology Collaborative. The authors would like to thank Thomas Helmuth, Ryan Boldi, and Anil Saini for their helpful comments and suggestions.
\end{acks}

\bibliographystyle{ACM-Reference-Format}
\bibliography{main.bib}


\begin{thebibliography}{48}


\ifx \showCODEN    \undefined \def \showCODEN     #1{\unskip}     \fi
\ifx \showDOI      \undefined \def \showDOI       #1{#1}\fi
\ifx \showISBNx    \undefined \def \showISBNx     #1{\unskip}     \fi
\ifx \showISBNxiii \undefined \def \showISBNxiii  #1{\unskip}     \fi
\ifx \showISSN     \undefined \def \showISSN      #1{\unskip}     \fi
\ifx \showLCCN     \undefined \def \showLCCN      #1{\unskip}     \fi
\ifx \shownote     \undefined \def \shownote      #1{#1}          \fi
\ifx \showarticletitle \undefined \def \showarticletitle #1{#1}   \fi
\ifx \showURL      \undefined \def \showURL       {\relax}        \fi
\providecommand\bibfield[2]{#2}
\providecommand\bibinfo[2]{#2}
\providecommand\natexlab[1]{#1}
\providecommand\showeprint[2][]{arXiv:#2}

\bibitem[\protect\citeauthoryear{Aenugu and Spector}{Aenugu and
  Spector}{2019}]%
        {aenugu2019lexicase}
\bibfield{author}{\bibinfo{person}{Sneha Aenugu} {and} \bibinfo{person}{Lee
  Spector}.} \bibinfo{year}{2019}\natexlab{}.
\newblock \showarticletitle{Lexicase selection in learning classifier systems}.
  In \bibinfo{booktitle}{\emph{Proceedings of the Genetic and Evolutionary
  Computation Conference}}. \bibinfo{pages}{356--364}.
\newblock


\bibitem[\protect\citeauthoryear{Baker}{Baker}{2014}]%
        {baker2014adaptive}
\bibfield{author}{\bibinfo{person}{James~Edward Baker}.}
  \bibinfo{year}{2014}\natexlab{}.
\newblock \showarticletitle{Adaptive selection methods for genetic algorithms}.
  In \bibinfo{booktitle}{\emph{Proceedings of the first international
  conference on genetic algorithms and their applications}}. Psychology Press,
  \bibinfo{pages}{101--106}.
\newblock


\bibitem[\protect\citeauthoryear{Baker et~al\mbox{.}}{Baker
  et~al\mbox{.}}{1987}]%
        {baker1987reducing}
\bibfield{author}{\bibinfo{person}{James~E Baker} {et~al\mbox{.}}}
  \bibinfo{year}{1987}\natexlab{}.
\newblock \showarticletitle{Reducing bias and inefficiency in the selection
  algorithm}. In \bibinfo{booktitle}{\emph{Proceedings of the second
  international conference on genetic algorithms}}, Vol.~\bibinfo{volume}{206}.
  \bibinfo{pages}{14--21}.
\newblock


\bibitem[\protect\citeauthoryear{Boldi, Briesch, Sobania, Lalejini, Helmuth,
  Rothlauf, Ofria, and Spector}{Boldi et~al\mbox{.}}{2023}]%
        {boldi2023informed}
\bibfield{author}{\bibinfo{person}{Ryan Boldi}, \bibinfo{person}{Martin
  Briesch}, \bibinfo{person}{Dominik Sobania}, \bibinfo{person}{Alexander
  Lalejini}, \bibinfo{person}{Thomas Helmuth}, \bibinfo{person}{Franz
  Rothlauf}, \bibinfo{person}{Charles Ofria}, {and} \bibinfo{person}{Lee
  Spector}.} \bibinfo{year}{2023}\natexlab{}.
\newblock \showarticletitle{Informed Down-Sampled Lexicase Selection:
  Identifying productive training cases for efficient problem solving}.
\newblock \bibinfo{journal}{\emph{arXiv preprint arXiv:2301.01488}}
  (\bibinfo{year}{2023}).
\newblock


\bibitem[\protect\citeauthoryear{De~Melo, Vargas, and Banzhaf}{De~Melo
  et~al\mbox{.}}{2019}]%
        {de2019batch}
\bibfield{author}{\bibinfo{person}{Vin{\'\i}cius~V De~Melo},
  \bibinfo{person}{Danilo~Vasconcellos Vargas}, {and} \bibinfo{person}{Wolfgang
  Banzhaf}.} \bibinfo{year}{2019}\natexlab{}.
\newblock \showarticletitle{Batch tournament selection for genetic programming:
  the quality of lexicase, the speed of tournament}. In
  \bibinfo{booktitle}{\emph{Proceedings of the genetic and evolutionary
  computation conference}}. \bibinfo{pages}{994--1002}.
\newblock


\bibitem[\protect\citeauthoryear{Ding, Boldi, Helmuth, and Spector}{Ding
  et~al\mbox{.}}{2022a}]%
        {ding2022going}
\bibfield{author}{\bibinfo{person}{Li Ding}, \bibinfo{person}{Ryan Boldi},
  \bibinfo{person}{Thomas Helmuth}, {and} \bibinfo{person}{Lee Spector}.}
  \bibinfo{year}{2022}\natexlab{a}.
\newblock \showarticletitle{Going faster and hence further with lexicase
  selection}. In \bibinfo{booktitle}{\emph{Proceedings of the Genetic and
  Evolutionary Computation Conference Companion}}. \bibinfo{pages}{538--541}.
\newblock


\bibitem[\protect\citeauthoryear{Ding, Boldi, Helmuth, and Spector}{Ding
  et~al\mbox{.}}{2022b}]%
        {ding2022lexicase}
\bibfield{author}{\bibinfo{person}{Li Ding}, \bibinfo{person}{Ryan Boldi},
  \bibinfo{person}{Thomas Helmuth}, {and} \bibinfo{person}{Lee Spector}.}
  \bibinfo{year}{2022}\natexlab{b}.
\newblock \showarticletitle{Lexicase selection at scale}. In
  \bibinfo{booktitle}{\emph{Proceedings of the Genetic and Evolutionary
  Computation Conference Companion}}. \bibinfo{pages}{2054--2062}.
\newblock


\bibitem[\protect\citeauthoryear{Ding and Spector}{Ding and Spector}{2021}]%
        {ding2021evolving}
\bibfield{author}{\bibinfo{person}{Li Ding} {and} \bibinfo{person}{Lee
  Spector}.} \bibinfo{year}{2021}\natexlab{}.
\newblock \showarticletitle{Evolving neural selection with adaptive
  regularization}. In \bibinfo{booktitle}{\emph{Proceedings of the Genetic and
  Evolutionary Computation Conference Companion}}. \bibinfo{pages}{1717--1725}.
\newblock


\bibitem[\protect\citeauthoryear{Ding and Spector}{Ding and Spector}{2022a}]%
        {ding2022evolutionary}
\bibfield{author}{\bibinfo{person}{Li Ding} {and} \bibinfo{person}{Lee
  Spector}.} \bibinfo{year}{2022}\natexlab{a}.
\newblock \showarticletitle{Evolutionary quantum architecture search for
  parametrized quantum circuits}. In \bibinfo{booktitle}{\emph{Proceedings of
  the Genetic and Evolutionary Computation Conference Companion}}.
  \bibinfo{pages}{2190--2195}.
\newblock


\bibitem[\protect\citeauthoryear{Ding and Spector}{Ding and Spector}{2022b}]%
        {ding2022optimizing}
\bibfield{author}{\bibinfo{person}{Li Ding} {and} \bibinfo{person}{Lee
  Spector}.} \bibinfo{year}{2022}\natexlab{b}.
\newblock \showarticletitle{Optimizing Neural Networks with Gradient Lexicase
  Selection}. In \bibinfo{booktitle}{\emph{International Conference on Learning
  Representations}}.
\newblock


\bibitem[\protect\citeauthoryear{Ding and Spector}{Ding and Spector}{2023}]%
        {ding2023multi}
\bibfield{author}{\bibinfo{person}{Li Ding} {and} \bibinfo{person}{Lee
  Spector}.} \bibinfo{year}{2023}\natexlab{}.
\newblock \showarticletitle{Multi-Objective Evolutionary Architecture Search
  for Parameterized Quantum Circuits}.
\newblock \bibinfo{journal}{\emph{Entropy}} \bibinfo{volume}{25},
  \bibinfo{number}{1} (\bibinfo{year}{2023}), \bibinfo{pages}{93}.
\newblock


\bibitem[\protect\citeauthoryear{Dolson}{Dolson}{2023}]%
        {dolson2023calculating}
\bibfield{author}{\bibinfo{person}{Emily Dolson}.}
  \bibinfo{year}{2023}\natexlab{}.
\newblock \showarticletitle{Calculating lexicase selection probabilities is
  NP-Hard}.
\newblock \bibinfo{journal}{\emph{arXiv preprint arXiv:2301.06724}}
  (\bibinfo{year}{2023}).
\newblock


\bibitem[\protect\citeauthoryear{Ferguson, Hernandez, Junghans, Lalejini,
  Dolson, and Ofria}{Ferguson et~al\mbox{.}}{2019}]%
        {ferguson2019characterizing}
\bibfield{author}{\bibinfo{person}{Austin~J. Ferguson},
  \bibinfo{person}{Jose~Guadalupe Hernandez}, \bibinfo{person}{Daniel
  Junghans}, \bibinfo{person}{Alexander Lalejini}, \bibinfo{person}{Emily
  Dolson}, {and} \bibinfo{person}{Charles Ofria}.}
  \bibinfo{year}{2019}\natexlab{}.
\newblock \showarticletitle{Characterizing the effects of random subsampling
  and dilution on Lexicase selection}, In \bibinfo{booktitle}{Genetic
  Programming Theory and Practice XVII},
  \bibfield{editor}{\bibinfo{person}{Wolfgang Banzhaf}, \bibinfo{person}{Erik
  Goodman}, \bibinfo{person}{Leigh Sheneman}, \bibinfo{person}{Leonardo
  Trujillo}, {and} \bibinfo{person}{Bill Worzel}} (Eds.).
\newblock \bibinfo{journal}{\emph{Genetic Programming Theory and Practice
  XVII}}.
\newblock


\bibitem[\protect\citeauthoryear{Fieldsend and Moraglio}{Fieldsend and
  Moraglio}{2015}]%
        {fieldsend2015strength}
\bibfield{author}{\bibinfo{person}{Jonathan~E Fieldsend} {and}
  \bibinfo{person}{Alberto Moraglio}.} \bibinfo{year}{2015}\natexlab{}.
\newblock \showarticletitle{Strength through diversity: Disaggregation and
  multi-objectivisation approaches for genetic programming}. In
  \bibinfo{booktitle}{\emph{Proceedings of the 2015 Annual Conference on
  Genetic and Evolutionary Computation}}. \bibinfo{pages}{1031--1038}.
\newblock


\bibitem[\protect\citeauthoryear{Forstenlechner, Fagan, Nicolau, and
  O'Neill}{Forstenlechner et~al\mbox{.}}{2018}]%
        {forstenlechner2018extending}
\bibfield{author}{\bibinfo{person}{Stefan Forstenlechner},
  \bibinfo{person}{David Fagan}, \bibinfo{person}{Miguel Nicolau}, {and}
  \bibinfo{person}{Michael O'Neill}.} \bibinfo{year}{2018}\natexlab{}.
\newblock \showarticletitle{Extending program synthesis grammars for
  grammar-guided genetic programming}. In \bibinfo{booktitle}{\emph{Parallel
  Problem Solving from Nature--PPSN XV: 15th International Conference, Coimbra,
  Portugal, September 8--12, 2018, Proceedings, Part I 15}}. Springer,
  \bibinfo{pages}{197--208}.
\newblock


\bibitem[\protect\citeauthoryear{Galvan-Lopez, Cody-Kenny, Trujillo, and
  Kattan}{Galvan-Lopez et~al\mbox{.}}{2013}]%
        {galvan2013using}
\bibfield{author}{\bibinfo{person}{Edgar Galvan-Lopez},
  \bibinfo{person}{Brendan Cody-Kenny}, \bibinfo{person}{Leonardo Trujillo},
  {and} \bibinfo{person}{Ahmed Kattan}.} \bibinfo{year}{2013}\natexlab{}.
\newblock \showarticletitle{Using semantics in the selection mechanism in
  genetic programming: a simple method for promoting semantic diversity}. In
  \bibinfo{booktitle}{\emph{2013 IEEE Congress on Evolutionary Computation}}.
  IEEE, \bibinfo{pages}{2972--2979}.
\newblock


\bibitem[\protect\citeauthoryear{Golberg}{Golberg}{1989}]%
        {golberg1989genetic}
\bibfield{author}{\bibinfo{person}{David~E Golberg}.}
  \bibinfo{year}{1989}\natexlab{}.
\newblock \showarticletitle{Genetic algorithms in search, optimization, and
  machine learning}.
\newblock \bibinfo{journal}{\emph{Addion wesley}} \bibinfo{volume}{1989},
  \bibinfo{number}{102} (\bibinfo{year}{1989}), \bibinfo{pages}{36}.
\newblock


\bibitem[\protect\citeauthoryear{Goldberg and Deb}{Goldberg and Deb}{1991}]%
        {goldberg1991comparative}
\bibfield{author}{\bibinfo{person}{David~E Goldberg} {and}
  \bibinfo{person}{Kalyanmoy Deb}.} \bibinfo{year}{1991}\natexlab{}.
\newblock \showarticletitle{A comparative analysis of selection schemes used in
  genetic algorithms}.
\newblock In \bibinfo{booktitle}{\emph{Foundations of genetic algorithms}}.
  Vol.~\bibinfo{volume}{1}. \bibinfo{publisher}{Elsevier},
  \bibinfo{pages}{69--93}.
\newblock


\bibitem[\protect\citeauthoryear{Helmuth, Lengler, and La~Cava}{Helmuth
  et~al\mbox{.}}{2022}]%
        {helmuth2022population}
\bibfield{author}{\bibinfo{person}{Thomas Helmuth}, \bibinfo{person}{Johannes
  Lengler}, {and} \bibinfo{person}{William La~Cava}.}
  \bibinfo{year}{2022}\natexlab{}.
\newblock \showarticletitle{Population Diversity Leads to Short Running Times
  of Lexicase Selection}. In \bibinfo{booktitle}{\emph{Parallel Problem Solving
  from Nature--PPSN XVII: 17th International Conference, PPSN 2022, Dortmund,
  Germany, September 10--14, 2022, Proceedings, Part II}}. Springer,
  \bibinfo{pages}{485--498}.
\newblock


\bibitem[\protect\citeauthoryear{Helmuth, McPhee, and Spector}{Helmuth
  et~al\mbox{.}}{2016a}]%
        {helmuth2016impact}
\bibfield{author}{\bibinfo{person}{Thomas Helmuth},
  \bibinfo{person}{Nicholas~Freitag McPhee}, {and} \bibinfo{person}{Lee
  Spector}.} \bibinfo{year}{2016}\natexlab{a}.
\newblock \showarticletitle{The impact of hyperselection on lexicase
  selection}. In \bibinfo{booktitle}{\emph{Proceedings of the Genetic and
  Evolutionary Computation Conference 2016}}. \bibinfo{pages}{717--724}.
\newblock


\bibitem[\protect\citeauthoryear{Helmuth, McPhee, and Spector}{Helmuth
  et~al\mbox{.}}{2016b}]%
        {helmuth2016lexicase}
\bibfield{author}{\bibinfo{person}{Thomas Helmuth},
  \bibinfo{person}{Nicholas~Freitag McPhee}, {and} \bibinfo{person}{Lee
  Spector}.} \bibinfo{year}{2016}\natexlab{b}.
\newblock \showarticletitle{Lexicase selection for program synthesis: a
  diversity analysis}.
\newblock In \bibinfo{booktitle}{\emph{Genetic Programming Theory and Practice
  XIII}}. \bibinfo{publisher}{Springer}, \bibinfo{pages}{151--167}.
\newblock


\bibitem[\protect\citeauthoryear{Helmuth, McPhee, and Spector}{Helmuth
  et~al\mbox{.}}{2018}]%
        {helmuth2018program}
\bibfield{author}{\bibinfo{person}{Thomas Helmuth},
  \bibinfo{person}{Nicholas~Freitag McPhee}, {and} \bibinfo{person}{Lee
  Spector}.} \bibinfo{year}{2018}\natexlab{}.
\newblock \showarticletitle{Program synthesis using uniform mutation by
  addition and deletion}. In \bibinfo{booktitle}{\emph{Proceedings of the
  Genetic and Evolutionary Computation Conference}}.
  \bibinfo{pages}{1127--1134}.
\newblock


\bibitem[\protect\citeauthoryear{Helmuth, Pantridge, and Spector}{Helmuth
  et~al\mbox{.}}{2019}]%
        {helmuth2019lexicase}
\bibfield{author}{\bibinfo{person}{Thomas Helmuth}, \bibinfo{person}{Edward
  Pantridge}, {and} \bibinfo{person}{Lee Spector}.}
  \bibinfo{year}{2019}\natexlab{}.
\newblock \showarticletitle{Lexicase Selection of Specialists}. In
  \bibinfo{booktitle}{\emph{Proceedings of the Genetic and Evolutionary
  Computation Conference}} (Prague, Czech Republic)
  \emph{(\bibinfo{series}{GECCO '19})}. \bibinfo{publisher}{Association for
  Computing Machinery}, \bibinfo{address}{New York, NY, USA},
  \bibinfo{pages}{1030--1038}.
\newblock
\showISBNx{9781450361118}
\urldef\tempurl%
\url{https://doi.org/10.1145/3321707.3321875}
\showDOI{\tempurl}


\bibitem[\protect\citeauthoryear{Helmuth, Pantridge, and Spector}{Helmuth
  et~al\mbox{.}}{2020}]%
        {helmuth2020importance}
\bibfield{author}{\bibinfo{person}{Thomas Helmuth}, \bibinfo{person}{Edward
  Pantridge}, {and} \bibinfo{person}{Lee Spector}.}
  \bibinfo{year}{2020}\natexlab{}.
\newblock \showarticletitle{On the Importance of Specialists for Lexicase
  Selection}.
\newblock \bibinfo{journal}{\emph{Genetic Programming and Evolvable Machines}}
  \bibinfo{volume}{21}, \bibinfo{number}{3} (\bibinfo{date}{sep}
  \bibinfo{year}{2020}), \bibinfo{pages}{349--373}.
\newblock
\showISSN{1389-2576}
\urldef\tempurl%
\url{https://doi.org/10.1007/s10710-020-09377-2}
\showDOI{\tempurl}


\bibitem[\protect\citeauthoryear{Helmuth and Spector}{Helmuth and
  Spector}{2015}]%
        {helmuth2015general}
\bibfield{author}{\bibinfo{person}{Thomas Helmuth} {and} \bibinfo{person}{Lee
  Spector}.} \bibinfo{year}{2015}\natexlab{}.
\newblock \showarticletitle{General program synthesis benchmark suite}. In
  \bibinfo{booktitle}{\emph{Proceedings of the 2015 Annual Conference on
  Genetic and Evolutionary Computation}}. \bibinfo{pages}{1039--1046}.
\newblock


\bibitem[\protect\citeauthoryear{Helmuth and Spector}{Helmuth and
  Spector}{2020}]%
        {helmuth2020explaining}
\bibfield{author}{\bibinfo{person}{Thomas Helmuth} {and} \bibinfo{person}{Lee
  Spector}.} \bibinfo{year}{2020}\natexlab{}.
\newblock \showarticletitle{Explaining and exploiting the advantages of
  down-sampled lexicase selection}. In \bibinfo{booktitle}{\emph{ALIFE 2020:
  The 2020 Conference on Artificial Life}}. MIT Press,
  \bibinfo{pages}{341--349}.
\newblock


\bibitem[\protect\citeauthoryear{Helmuth and Spector}{Helmuth and
  Spector}{2022}]%
        {helmuth2022problem}
\bibfield{author}{\bibinfo{person}{Thomas Helmuth} {and} \bibinfo{person}{Lee
  Spector}.} \bibinfo{year}{2022}\natexlab{}.
\newblock \showarticletitle{Problem-solving benefits of down-sampled lexicase
  selection}.
\newblock \bibinfo{journal}{\emph{Artificial Life}} \bibinfo{volume}{27},
  \bibinfo{number}{3--4} (\bibinfo{year}{2022}), \bibinfo{pages}{183--203}.
\newblock


\bibitem[\protect\citeauthoryear{Helmuth, Spector, and Matheson}{Helmuth
  et~al\mbox{.}}{2014}]%
        {helmuth2014solving}
\bibfield{author}{\bibinfo{person}{Thomas Helmuth}, \bibinfo{person}{Lee
  Spector}, {and} \bibinfo{person}{James Matheson}.}
  \bibinfo{year}{2014}\natexlab{}.
\newblock \showarticletitle{Solving uncompromising problems with lexicase
  selection}.
\newblock \bibinfo{journal}{\emph{IEEE Transactions on Evolutionary
  Computation}} \bibinfo{volume}{19}, \bibinfo{number}{5}
  (\bibinfo{year}{2014}), \bibinfo{pages}{630--643}.
\newblock


\bibitem[\protect\citeauthoryear{Hernandez, Lalejini, Dolson, and
  Ofria}{Hernandez et~al\mbox{.}}{2019}]%
        {hernandez2019random}
\bibfield{author}{\bibinfo{person}{Jose~Guadalupe Hernandez},
  \bibinfo{person}{Alexander Lalejini}, \bibinfo{person}{Emily Dolson}, {and}
  \bibinfo{person}{Charles Ofria}.} \bibinfo{year}{2019}\natexlab{}.
\newblock \showarticletitle{Random subsampling improves performance in lexicase
  selection}. In \bibinfo{booktitle}{\emph{Proceedings of the Genetic and
  Evolutionary Computation Conference Companion}}. \bibinfo{pages}{2028--2031}.
\newblock


\bibitem[\protect\citeauthoryear{Huizinga and Clune}{Huizinga and
  Clune}{2018}]%
        {huizinga2018evolving}
\bibfield{author}{\bibinfo{person}{Joost Huizinga} {and} \bibinfo{person}{Jeff
  Clune}.} \bibinfo{year}{2018}\natexlab{}.
\newblock \showarticletitle{Evolving multimodal robot behavior via many
  stepping stones with the combinatorial multi-objective evolutionary
  algorithm}.
\newblock \bibinfo{journal}{\emph{arXiv preprint arXiv:1807.03392}}
  (\bibinfo{year}{2018}).
\newblock


\bibitem[\protect\citeauthoryear{Krawiec and Liskowski}{Krawiec and
  Liskowski}{2015}]%
        {krawiec2015automatic}
\bibfield{author}{\bibinfo{person}{Krzysztof Krawiec} {and}
  \bibinfo{person}{Pawe{\l} Liskowski}.} \bibinfo{year}{2015}\natexlab{}.
\newblock \showarticletitle{Automatic derivation of search objectives for
  test-based genetic programming}. In \bibinfo{booktitle}{\emph{European
  Conference on Genetic Programming}}. Springer, \bibinfo{pages}{53--65}.
\newblock


\bibitem[\protect\citeauthoryear{La~Cava, Helmuth, Spector, and Moore}{La~Cava
  et~al\mbox{.}}{2019}]%
        {la2019probabilistic}
\bibfield{author}{\bibinfo{person}{William La~Cava}, \bibinfo{person}{Thomas
  Helmuth}, \bibinfo{person}{Lee Spector}, {and} \bibinfo{person}{Jason~H
  Moore}.} \bibinfo{year}{2019}\natexlab{}.
\newblock \showarticletitle{A probabilistic and multi-objective analysis of
  lexicase selection and $\varepsilon$-lexicase selection}.
\newblock \bibinfo{journal}{\emph{Evolutionary Computation}}
  \bibinfo{volume}{27}, \bibinfo{number}{3} (\bibinfo{year}{2019}),
  \bibinfo{pages}{377--402}.
\newblock


\bibitem[\protect\citeauthoryear{La~Cava and Moore}{La~Cava and Moore}{2018}]%
        {la2018behavioral}
\bibfield{author}{\bibinfo{person}{William La~Cava} {and}
  \bibinfo{person}{Jason Moore}.} \bibinfo{year}{2018}\natexlab{}.
\newblock \showarticletitle{Behavioral search drivers and the role of elitism
  in soft robotics}. In \bibinfo{booktitle}{\emph{ALIFE 2018: The 2018
  Conference on Artificial Life}}. MIT Press, \bibinfo{pages}{206--213}.
\newblock


\bibitem[\protect\citeauthoryear{La~Cava and Moore}{La~Cava and Moore}{2020a}]%
        {la2020genetic}
\bibfield{author}{\bibinfo{person}{William La~Cava} {and}
  \bibinfo{person}{Jason~H Moore}.} \bibinfo{year}{2020}\natexlab{a}.
\newblock \showarticletitle{Genetic programming approaches to learning fair
  classifiers}. In \bibinfo{booktitle}{\emph{Proceedings of the 2020 Genetic
  and Evolutionary Computation Conference}}. \bibinfo{pages}{967--975}.
\newblock


\bibitem[\protect\citeauthoryear{La~Cava and Moore}{La~Cava and Moore}{2020b}]%
        {la2020learning}
\bibfield{author}{\bibinfo{person}{William La~Cava} {and}
  \bibinfo{person}{Jason~H Moore}.} \bibinfo{year}{2020}\natexlab{b}.
\newblock \showarticletitle{Learning feature spaces for regression with genetic
  programming}.
\newblock \bibinfo{journal}{\emph{Genetic Programming and Evolvable Machines}}
  \bibinfo{volume}{21}, \bibinfo{number}{3} (\bibinfo{year}{2020}),
  \bibinfo{pages}{433--467}.
\newblock


\bibitem[\protect\citeauthoryear{La~Cava, Orzechowski, Burlacu,
  de~Fran{\c{c}}a, Virgolin, Jin, Kommenda, and Moore}{La~Cava
  et~al\mbox{.}}{2021}]%
        {la2021contemporary}
\bibfield{author}{\bibinfo{person}{William La~Cava}, \bibinfo{person}{Patryk
  Orzechowski}, \bibinfo{person}{Bogdan Burlacu},
  \bibinfo{person}{Fabr{\'\i}cio~Olivetti de Fran{\c{c}}a},
  \bibinfo{person}{Marco Virgolin}, \bibinfo{person}{Ying Jin},
  \bibinfo{person}{Michael Kommenda}, {and} \bibinfo{person}{Jason~H Moore}.}
  \bibinfo{year}{2021}\natexlab{}.
\newblock \showarticletitle{Contemporary symbolic regression methods and their
  relative performance}. In \bibinfo{booktitle}{\emph{Proceedings of the Neural
  Information Processing Systems Track on Datasets and Benchmarks}},
  Vol.~\bibinfo{volume}{1}.
\newblock


\bibitem[\protect\citeauthoryear{La~Cava, Singh, Taggart, Suri, and
  Moore}{La~Cava et~al\mbox{.}}{2018}]%
        {la2018learning}
\bibfield{author}{\bibinfo{person}{William La~Cava}, \bibinfo{person}{Tilak~Raj
  Singh}, \bibinfo{person}{James Taggart}, \bibinfo{person}{Srinivas Suri},
  {and} \bibinfo{person}{Jason~H Moore}.} \bibinfo{year}{2018}\natexlab{}.
\newblock \showarticletitle{Learning concise representations for regression by
  evolving networks of trees}.
\newblock \bibinfo{journal}{\emph{arXiv preprint arXiv:1807.00981}}
  (\bibinfo{year}{2018}).
\newblock


\bibitem[\protect\citeauthoryear{La~Cava, Spector, and Danai}{La~Cava
  et~al\mbox{.}}{2016}]%
        {la2016epsilon}
\bibfield{author}{\bibinfo{person}{William La~Cava}, \bibinfo{person}{Lee
  Spector}, {and} \bibinfo{person}{Kourosh Danai}.}
  \bibinfo{year}{2016}\natexlab{}.
\newblock \showarticletitle{Epsilon-lexicase selection for regression}. In
  \bibinfo{booktitle}{\emph{Proceedings of the Genetic and Evolutionary
  Computation Conference 2016}}. \bibinfo{pages}{741--748}.
\newblock


\bibitem[\protect\citeauthoryear{Lipowski and Lipowska}{Lipowski and
  Lipowska}{2012}]%
        {lipowski2012roulette}
\bibfield{author}{\bibinfo{person}{Adam Lipowski} {and} \bibinfo{person}{Dorota
  Lipowska}.} \bibinfo{year}{2012}\natexlab{}.
\newblock \showarticletitle{Roulette-wheel selection via stochastic
  acceptance}.
\newblock \bibinfo{journal}{\emph{Physica A: Statistical Mechanics and its
  Applications}} \bibinfo{volume}{391}, \bibinfo{number}{6}
  (\bibinfo{year}{2012}), \bibinfo{pages}{2193--2196}.
\newblock


\bibitem[\protect\citeauthoryear{Liskowski, Krawiec, Helmuth, and
  Spector}{Liskowski et~al\mbox{.}}{2015}]%
        {liskowski2015comparison}
\bibfield{author}{\bibinfo{person}{Pawel Liskowski}, \bibinfo{person}{Krzysztof
  Krawiec}, \bibinfo{person}{Thomas Helmuth}, {and} \bibinfo{person}{Lee
  Spector}.} \bibinfo{year}{2015}\natexlab{}.
\newblock \showarticletitle{Comparison of semantic-aware selection methods in
  genetic programming}. In \bibinfo{booktitle}{\emph{Proceedings of the
  Companion Publication of the 2015 Annual Conference on Genetic and
  Evolutionary Computation}}. \bibinfo{pages}{1301--1307}.
\newblock


\bibitem[\protect\citeauthoryear{Mood}{Mood}{1950}]%
        {mood1950introduction}
\bibfield{author}{\bibinfo{person}{Alexander~McFarlane Mood}.}
  \bibinfo{year}{1950}\natexlab{}.
\newblock \showarticletitle{Introduction to the Theory of Statistics.}
\newblock  (\bibinfo{year}{1950}).
\newblock


\bibitem[\protect\citeauthoryear{Moore and Stanton}{Moore and Stanton}{2017}]%
        {moore2017lexicase}
\bibfield{author}{\bibinfo{person}{Jared~M Moore} {and} \bibinfo{person}{Adam
  Stanton}.} \bibinfo{year}{2017}\natexlab{}.
\newblock \showarticletitle{Lexicase selection outperforms previous strategies
  for incremental evolution of virtual creature controllers}. In
  \bibinfo{booktitle}{\emph{ECAL 2017, the Fourteenth European Conference on
  Artificial Life}}. MIT Press, \bibinfo{pages}{290--297}.
\newblock


\bibitem[\protect\citeauthoryear{Moore and Stanton}{Moore and Stanton}{2018}]%
        {moore2018tiebreaks}
\bibfield{author}{\bibinfo{person}{Jared~M Moore} {and} \bibinfo{person}{Adam
  Stanton}.} \bibinfo{year}{2018}\natexlab{}.
\newblock \bibinfo{title}{{Tiebreaks and Diversity: Isolating Effects in
  Lexicase Selection}}.
\newblock , \bibinfo{numpages}{590--597}~pages.
\newblock
\urldef\tempurl%
\url{https://doi.org/10.1162/isal_a_00109}
\showDOI{\tempurl}


\bibitem[\protect\citeauthoryear{Pantridge, Helmuth, and Spector}{Pantridge
  et~al\mbox{.}}{2022}]%
        {pantridge2022functional}
\bibfield{author}{\bibinfo{person}{Edward Pantridge}, \bibinfo{person}{Thomas
  Helmuth}, {and} \bibinfo{person}{Lee Spector}.}
  \bibinfo{year}{2022}\natexlab{}.
\newblock \showarticletitle{Functional code building genetic programming}. In
  \bibinfo{booktitle}{\emph{Proceedings of the Genetic and Evolutionary
  Computation Conference}}. \bibinfo{pages}{1000--1008}.
\newblock


\bibitem[\protect\citeauthoryear{Pantridge and Spector}{Pantridge and
  Spector}{2020}]%
        {pantridge2020code}
\bibfield{author}{\bibinfo{person}{Edward Pantridge} {and} \bibinfo{person}{Lee
  Spector}.} \bibinfo{year}{2020}\natexlab{}.
\newblock \showarticletitle{Code building genetic programming}. In
  \bibinfo{booktitle}{\emph{Proceedings of the 2020 Genetic and Evolutionary
  Computation Conference}}. \bibinfo{pages}{994--1002}.
\newblock


\bibitem[\protect\citeauthoryear{Pearson}{Pearson}{1900}]%
        {pearson1900x}
\bibfield{author}{\bibinfo{person}{Karl Pearson}.}
  \bibinfo{year}{1900}\natexlab{}.
\newblock \showarticletitle{X. On the criterion that a given system of
  deviations from the probable in the case of a correlated system of variables
  is such that it can be reasonably supposed to have arisen from random
  sampling}.
\newblock \bibinfo{journal}{\emph{The London, Edinburgh, and Dublin
  Philosophical Magazine and Journal of Science}} \bibinfo{volume}{50},
  \bibinfo{number}{302} (\bibinfo{year}{1900}), \bibinfo{pages}{157--175}.
\newblock


\bibitem[\protect\citeauthoryear{Spector}{Spector}{2012}]%
        {spector2012assessment}
\bibfield{author}{\bibinfo{person}{Lee Spector}.}
  \bibinfo{year}{2012}\natexlab{}.
\newblock \showarticletitle{Assessment of problem modality by differential
  performance of lexicase selection in genetic programming: a preliminary
  report}. In \bibinfo{booktitle}{\emph{Proceedings of the 14th annual
  conference companion on Genetic and evolutionary computation}}.
  \bibinfo{pages}{401--408}.
\newblock


\bibitem[\protect\citeauthoryear{Troise and Helmuth}{Troise and
  Helmuth}{2018}]%
        {troise2018lexicase}
\bibfield{author}{\bibinfo{person}{Sarah~Anne Troise} {and}
  \bibinfo{person}{Thomas Helmuth}.} \bibinfo{year}{2018}\natexlab{}.
\newblock \showarticletitle{Lexicase selection with weighted shuffle}.
\newblock In \bibinfo{booktitle}{\emph{Genetic Programming Theory and Practice
  XV}}. \bibinfo{publisher}{Springer}, \bibinfo{pages}{89--104}.
\newblock


\end{thebibliography}

\newpage
\appendix

\section{Lexicase Selection}
\label{asec:lex}

Lexicase selection~\citep{helmuth2014solving,spector2012assessment} is a parent selection method in population-based evolutionary algorithms. An example algorithm of applying lexicase selection in a common genetic programming setting is outlined in Alg.~\ref{alg:lexi}.

The main concept behind lexicase selection is that, during each selection event, a randomly shuffled sequence of training cases is considered. The individuals that survive are only those whose fitness is the optimal among all the cases that have been considered so far, according to a specific ordering. The selection process continues as the sequence of cases is traversed, until only one candidate is left or until all the training cases have been examined. If there are still remaining candidates at the end, a random selection is made from among them.

As the ordering of training cases is randomized for each selection event in lexicase selection, every training case has a chance to be prioritized and placed at the beginning of the sequence. This allows for the possibility of selecting specialist individuals that may have lower average performance but excel in one or more specific cases compared to other individuals.

\begin{algorithm}
  \KwData{
    \begin{itemize}
      \item \texttt{cases} - randomly shuffled sequence of data samples
      \item \texttt{candidates} - the entire population of programs
    \end{itemize}
  }
  \KwResult{
    \begin{itemize}
      \item an individual program to be used as a parent
    \end{itemize}
  }

  \For{\texttt{case} in \texttt{cases}}{
    \texttt{candidates} $\gets$ the subset of the current \texttt{candidates} that have exactly best performance on \texttt{case}

    \If{\texttt{candidates} contains only one single \texttt{candidate}}{\KwRet{\texttt{candidate}}}
  }

  \KwRet{a randomly selected \texttt{candidate} in \texttt{candidates}}

  \caption{Lexicase selection to select one parent in genetic programming~\cite{ding2022optimizing}}
  \label{alg:lexi}
\end{algorithm}

\section{Supplementary Materials}

The source code to reproduce our experiments can be found at \url{https://github.com/ld-ing/plexicase}.

\end{document}